\documentclass{article}

\PassOptionsToPackage{numbers,compress}{natbib}
\usepackage[preprint]{neurips_2026} %

\usepackage{macros}

\title{Learn from your own latents and not from tokens:\\ A sample-complexity theory }

\author{%
  Daniel J. Korchinski \thanks{ Equal contribution } \\
  Institute of Physics\\
  EPFL \\
  Lausanne, VD Switzerland \\
  \texttt{daniel.korchinski@epfl.ch} \\
  \And 
  Alessandro Favero \footnotemark[1] \\
  Department of Applied Maths and Theoretical Physics\\
  University of Cambridge \\ 
  Cambridge, UK \\ 
  \texttt{af940@cam.ac.uk} \\
  \And 
  Matthieu Wyart \\ 
  Department of Physics \& Institute of Physics \\
  Johns Hopkins University \& EPFL \\
  Baltimore, MD USA \& Lausanne, VD Switzerland \\
  \texttt{matthieu.wyart@epfl.ch}
}
  
\begin{document}

\maketitle

\begin{abstract}
Generative models, from diffusion models to large language models, achieve remarkable performance but at a cost in training data orders of magnitude larger than what biological learners require. An alternative paradigm has emerged in which networks are trained to predict their \emph{own} latent representations of related views or masked regions, as in data2vec and JEPA -- an idea related to predictive-coding accounts of the cortex. Despite strong empirical results, the theoretical understanding of these methods remains limited. Central questions include: by how much does latent prediction actually improve data efficiency? Is there a benefit to stacking such methods into multi-scale hierarchies? We answer both using as data a tractable probabilistic context-free grammar that captures the compositional structure of natural language and images. Such a grammar generates strings of visible tokens by recursively applying production rules along a tree of hidden symbols of depth $L$. For such data, supervised or token-level SSL require a number of samples \emph{exponential} in $L$ to recover the latent tree; we prove that latent prediction achieves this with a number of samples \emph{constant} in $L$, up to logarithmic factors. We confirm this bound with (i) a hierarchical clustering algorithm, (ii) an end-to-end neural network whose predictor-clusterer modules predict their own latents at each level via gradient descent, and (iii) the first sample-complexity analysis of data2vec, which we show implicitly performs hierarchical latent prediction. This suggests that explicit stacking such as H-JEPA is largely redundant.\looseness=-1 \end{abstract}

\vspace{-1.5em} \section{Introduction} \label{sec:intro}
Generative models have reached striking empirical success. Diffusion models produce realistic images and video~\citep{ho2020denoising, rombach2022high, brooks2024video}, while large language models trained by next-token prediction master grammar, world knowledge and reasoning~\citep{brown2020language, openai2023gpt4, deepseekai2025r1}. Both rest on a single recipe: predict masked or future fragments of the raw signal at massive scale. Yet this success is bought at a cost biological learners do not pay.

Frontier LLMs are trained on $10^{13}$--$10^{14}$ tokens~\citep{hoffmann2022training, touvron2023llama}, more than five orders of magnitude beyond what a child encounters before reaching adult-level competence~\citep{gilkerson2017mapping, linzen2020accelerate, frank2023bridging}; state-of-the-art diffusion models likewise rely on billions of images~\citep{schuhmann2022laion}. This gap suggests that token-level pretraining is far from optimal, and several hypotheses have been advanced to explain it. One is that biological learning is multimodal and grounded~\citep{smith2005development,  alayrac2022flamingo}. A second -- the one we pursue -- is that learning may not be most efficient at the level of raw tokens, but instead in a more abstract latent space~\citep{rao1999predictive, friston2010free, lecun2022path}.
Indeed, predicting semantic concepts or continuous latents has
been shown to improve data efficiency~\citep{tackLLMPretrainingContinuous2025, liuNextConceptPrediction2026}.

An alternative strategy to input reconstruction has gained traction in the last five years: the network is trained to predict
its \emph{own} latent representations of related views, masked regions
or future signals. These targets are themselves
the output of a learned encoder, lifting prediction into increasingly
abstract spaces as learning progresses. The idea relates to computational neuroscience,
where predictive coding posits that the cortex seeks to predict its own future activity~\citep{rao1999predictive,illingLocalPlasticityRules2021,
millidge2022predictive}. A growing family of self-supervised methods instantiates this principle, including  BYOL~\citep{grillBootstrapYourOwn2020},
DINO ~\citep{caronEmergingPropertiesSelfsupervised2021},
data2vec~\citep{baevskiData2vecGeneralFramework2022} and  JEPA
~\citep{lecun2022path}.\looseness=-1

Some advantages of these algorithms over input reconstruction approaches have been discussed in the literature, including the fact that they can be formulated as energy-based methods~\citep{lecun2022path} or are more robust to the presence of irrelevant features in the data~\citep{vanassel2025joint}. Yet understanding why and by how much these methods improve data efficiency remains a fundamental challenge in both machine learning and neuroscience. Another key question is the putative benefits of stacking such algorithms so as to obtain a multi-scale hierarchy of learned representations. This idea, proposed years ago  ~\citep{lecun2022path},  was recently implemented ~\citep{liHiTJEPAHierarchicalSelfsupervised2025, girgisHierarchicalJEPAMeets2026} and  appears to lead at best to moderate improvements, for  reasons that remain currently unclear.

In this work, we address both questions by focusing on synthetic models of data with a 
hierarchical latent structure. Probabilistic context-free grammars
(PCFGs) are a natural candidate: through recursive production rules
they generate data compositionally from a tree of latent variables, and
have long been influential as models of language ~\citep{booth1973applying, manning1999foundations} and natural images~\citep{grenander1996elements,
mumford1994pattern}. General PCFGs are notoriously hard to
treat. We therefore work with the Random Hierarchy Model
(RHM)~\citep{cagnetta2024deep}, a simplified PCFG
with fixed tree topology and random production rules. The RHM has
produced quantitatively predictive theories, including relating neural scaling laws to the statistics of natural language~\citep{cagnetta2024towards, cagnettaDerivingNeuralScaling2026}, and
a theory of compositional generalization and memorization in diffusion models for
language and images~\citep{sclocchiPhaseTransitionDiffusion2025, sclocchiProbingLatentHierarchical2025,favero2025bigger}.
The RHM is parameterized by the number of production rules
per symbol, $m$, and the depth of the latent tree, $L$. Prior work
shows that for deep architectures, supervised end-to-end learning requires an order of $m^L$
samples~\citep{cagnetta2024deep}, and token-level self-supervised objectives -- masked-language
modelling and diffusion -- require of order
$m^{L+1}$~\citep{cagnetta2024towards,favero2025compositional,cagnetta2025scaling}. 
Both costs grow \emph{exponentially} with the depth $L$ of the latent hierarchy (and thus polynomially in the input dimension, i.e., sequence length).
Against this baseline, we establish three results:\looseness=-1

\begin{enumerate}
    \item \textbf{Efficient algorithm for representation learning} (\Cref{sec:latent-clustering})\textbf{.} We show that a hierarchical clustering algorithm recovers the full non-root latent tree from order $m^3$ samples -- \textit{independent} of the depth \(L\), and hence exponentially fewer than those required by token-level SSL.\looseness=-1
    \item \textbf{An end-to-end stacked architecture with the same scaling} (\Cref{sec:neuralclustering})\textbf{.}
    We propose a new architecture where a stack of small predictor-clusterer modules predicts its own latents at each level. Trained end-to-end
    with gradient descent, it reaches the $m^3$ scaling on the RHM.
    \item \textbf{A theory of data2vec} (\Cref{sec:data2vec})\textbf{.} We show
    empirically and explain theoretically with reasonable assumptions that data2vec~\citep{baevskiData2vecGeneralFramework2022} implicitly
    performs hierarchical latent prediction, again leading to a sample complexity of order $m^3$. To our
    knowledge, this is the first sample-complexity analysis of such methods.
\end{enumerate}
{\it These results establish that learning from one's own latents can yield
huge sample-complexity gains over token-level objectives.} They also show that a single
latent-prediction module such as data2vec is in effect already performing multi-scale latent
discovery. Such methods may already be implicitly hierarchical,
weakening the case for explicit stacking such as H-JEPA.

\paragraph{Related work.}

The compositional and hierarchical structure of natural data has long been put forward as a key feature making it learnable by neural networks. Deep networks represent hierarchically compositional functions with exponentially fewer parameters than shallow ones \citep{poggio2017why}, yielding nonparametric sample-complexity bounds polynomial in the input dimension for both supervised \citep{schmidt2020nonparametric} and self-supervised \citep{mei2024unets} tasks. These results characterize expressivity and information-theoretic statistics, but do not describe what gradient descent can actually learn from finite data. A complementary line shows that transformers can approximately implement parsing-style inference on context-free grammars \citep{allenzhu2024cfg, zhao2023transformers, garnierbrun2024transformers}, characterizing the algorithm trained models implement, but not its sample complexity. The RHM \citep{cagnetta2024deep}, building on \citet{mossel2016deep}, \citet{malach2018provably}, and \citet{degiuli2019random}, fills this gap by enabling sample-complexity predictions that match observations in deep nets. Existing RHM analyses cover supervised learning \citep{cagnetta2024deep} and token-level SSL \citep{cagnetta2024towards,favero2025compositional}, but not learning from one's own latents, which is the focus of this work.

\section{Preliminaries}
\label{sec:rhm}

\begin{figure}[t]
\begin{center}
    \centering
    \resizebox{0.95\linewidth}{!}{%
        \begin{tabular}{@{}c@{\hspace{6mm}}c@{\hspace{6mm}}c@{}}
        \definecolor{rhmOchre}{RGB}{184,119,0}
\begin{forest}
    for tree={
        grow=south,
        parent anchor=south,
        child anchor=north,
        l sep=2mm,
        s sep=0.2mm,
        edge={line width=0.8pt},
        inner sep=0pt,
        circle,
        draw,
        thick,
        fill=white,
        if n children=0{
            minimum size=2.85mm
        }{
            minimum size=2.7mm
        }
    },
    tikz+={
        \node[draw=rhmOchre, ellipse, line width=1.1pt, fit=(crrrrl)(crrrrr), inner xsep=0.8mm, inner ysep=0.5mm] (classOrig) {};
        \node[text=rhmOchre, font=\scriptsize\bfseries] at ($(classOrig.east)+(0.20,0)$) {$T$};
        \node[text=rhmOchre, font=\scriptsize\bfseries] at (crrrr) {$\star$};
        \draw[->, line width=1.1pt, draw=rhmOchre] (classOrig.north east) .. controls +(0.26,0.34) and +(0.14,-0.46) .. (crrrr.east);
        \draw[->, line width=1.1pt, draw=rhmOchre] (crrrr.east) .. controls +(0.42,0.36) and +(0.12,-0.58) .. (crr.east);
        \draw[->, line width=1.1pt, draw=rhmOchre] (crr.east) .. controls +(0.50,0.34) and +(0.10,-0.58) .. (cr1.east);
        \draw[->, line width=1.1pt, draw=rhmOchre] (cr1.east) .. controls +(0.54,0.28) and +(0.22,-0.42) .. (croot.east);
        \node[draw=rhmOchre, text=rhmOchre, font=\scriptsize\bfseries, line width=1.1pt, minimum width=4.2mm, minimum height=4.2mm, inner sep=0pt] at (croot) {$Z$};
        \node[rhmOchre] at ($(cr1.east)+(1.02,-0.28)$) {$P \sim m^L$};
        \node[anchor=east, font=\scriptsize] at ($(cllll.west)+(-0.18,0)$) {$\ell=0$};
        \node[anchor=east, font=\scriptsize] at ($(clll.west)+(-0.18,0)$) {$\ell=1$};
        \node[anchor=east, font=\scriptsize] at ($(cll.west)+(-0.18,0)$) {$\ell=2$};
        \node[anchor=east, font=\scriptsize] at ($(cl1.west)+(-0.18,0)$) {$\ell=3$};
        \node[anchor=east, font=\scriptsize] at ($(croot.west)+(-0.18,0)$) {$\ell=L=4$};
    }
    [, name=croot
        [, name=cl1
            [, name=cll
                [, name=clll
                    [, name=cllll]
                    [, name=clllr]
                ]
                [, name=cllr
                    [, name=cllrl]
                    [, name=cllrr]
                ]
            ]
            [, name=clr
                [, name=clrl
                    [, name=clrll]
                    [, name=clrlr]
                ]
                [, name=clrr
                    [, name=clrrl]
                    [, name=clrrr]
                ]
            ]
        ]
        [, name=cr1
            [, name=crl
                [, name=crll
                    [, name=crlll]
                    [, name=crllr]
                ]
                [, name=crlr
                    [, name=crlrl]
                    [, name=crlrr]
                ]
            ]
            [, name=crr
                [, name=crrl
                    [, name=crrll]
                    [, name=crrlr]
                ]
                [, name=crrrr
                    [, name=crrrrl]
                    [, name=crrrrr]
                ]
            ]
        ]
    ]
\end{forest}
        &
        \definecolor{rhmOchre}{RGB}{184,119,0}
\begin{forest}
    for tree={
        grow=south,
        parent anchor=south,
        child anchor=north,
        l sep=2mm,
        s sep=0.2mm,
        edge={line width=0.8pt},
        inner sep=0pt,
        circle,
        draw,
        thick,
        fill=white,
        if n children=0{
            minimum size=2.85mm
        }{
            minimum size=2.7mm
        }
    },
    tikz+={
        \node[draw=rhmOchre, ellipse, line width=1.1pt, fit=(ll)(lr), inner xsep=1.0mm, inner ysep=0.7mm] (ntpOrig) {};
        \node[text=rhmOchre, font=\scriptsize\bfseries] at ($(ntpOrig.east)+(0.20,0)$) {$T$};
        \node[text=rhmOchre, font=\scriptsize\bfseries] at (l1) {$\star$};
        \draw[->, line width=1.1pt, draw=rhmOchre] (ntpOrig.north west) .. controls +(-0.22,0.44) and +(-0.38,-0.12) .. (l1.west);
        \draw[->, line width=1.1pt, draw=rhmOchre] (l1.east) .. controls +(0.34,0.14) and +(-0.34,0.14) .. (r1.west);
        \draw[->, line width=1.1pt, draw=rhmOchre] (r1.east) .. controls +(0.70,-0.34) and +(0.12,0.60) .. ($(rr.east)+(0.34,-0.05)$);
        \draw[->, line width=1.1pt, draw=rhmOchre] (rr.east) .. controls +(0.54,-0.30) and +(0.12,0.58) .. ($(rrrr.east)+(0.22,-0.02)$);
        \draw[->, line width=1.1pt, draw=rhmOchre] (rrrr.east) .. controls +(0.42,-0.28) and +(0.10,0.44) .. ($(rrrrr.east)+(0.16,-0.01)$);
        \node[draw=rhmOchre, text=rhmOchre, font=\scriptsize\bfseries, line width=1.1pt, minimum width=4.2mm, minimum height=4.2mm, inner sep=0pt] at (rrrrr) {$Z$};
        \node[rhmOchre] at ($(l1.east)!0.55!(r1.west)+(0.0,0.32)$) {$P \sim m^{L+1}$};
    }
    [, name=root
        [, name=l1
            [, name=ll
                [, name=lll
                    [, name=llll]
                    [, name=lllr]
                ]
                [, name=llr
                    [, name=llrl]
                    [, name=llrr]
                ]
            ]
            [, name=lr
                [, name=lrl
                    [, name=lrll]
                    [, name=lrlr]
                ]
                [, name=lrr
                    [, name=lrrl]
                    [, name=lrrr]
                ]
            ]
        ]
        [, name=r1
            [, name=rl
                [, name=rll
                    [, name=rlll]
                    [, name=rllr]
                ]
                [, name=rlr
                    [, name=rlrl]
                    [, name=rlrr]
                ]
            ]
            [, name=rr
                [, name=rrl
                    [, name=rrll]
                    [, name=rrlr]
                ]
                [, name=rrrr
                    [, name=rrrrl]
                    [, name=rrrrr]
                ]
            ]
        ]
    ]
\end{forest}
        &
        \definecolor{rhmOchre}{RGB}{184,119,0}
\definecolor{rhmTeal}{RGB}{0,128,128}
\definecolor{rhmViolet}{RGB}{118,80,160}
\begin{forest}
    for tree={
        grow=south,
        parent anchor=south,
        child anchor=north,
        l sep=2mm,
        s sep=0.2mm,
        edge={line width=0.8pt},
        inner sep=0pt,
        circle,
        draw,
        thick,
        fill=white,
        if n children=0{
            minimum size=2.85mm
        }{
            minimum size=2.7mm
        }
    },
    tikz+={
        \node[draw=rhmViolet, ellipse, line width=1.1pt, fit=(qLL)(qLR), inner xsep=1.0mm, inner ysep=0.7mm] (qorigV) {};
        \node[text=rhmViolet, font=\scriptsize\bfseries] at ($(qorigV.west)+(-0.20,0)$) {$T$};
        \node[text=rhmViolet, font=\scriptsize\bfseries] at (qL) {$\star$};
        \draw[->, line width=1.1pt, draw=rhmViolet] (qorigV.north west) .. controls +(-0.22,0.44) and +(-0.38,-0.12) .. (qL.west);
        \draw[->, line width=1.1pt, draw=rhmViolet] (qL.east) .. controls +(0.34,0.14) and +(-0.34,0.14) .. (qR.west);
        \draw[->, line width=1.1pt, draw=rhmViolet] (qR.east) .. controls +(0.44,-0.22) and +(0.10,0.42) .. (qRR.east);
        \node[draw=rhmViolet, text=rhmViolet, font=\scriptsize\bfseries, line width=1.1pt, minimum width=4.2mm, minimum height=4.2mm, inner sep=0pt] at (qRR) {$Z$};
        \node[rhmViolet] at ($(qL.east)!0.55!(qR.west)+(0.0,0.32)$) {$P \sim m^3$};
        \node[draw=rhmTeal, ellipse, line width=1.1pt, fit=(qLLL)(qLLR), inner xsep=0.9mm, inner ysep=0.6mm] (qorigT) {};
        \node[text=rhmTeal, font=\scriptsize\bfseries] at ($(qorigT.west)+(-0.18,0)$) {$T$};
        \node[text=rhmTeal, font=\scriptsize\bfseries] at (qLL) {$\star$};
        \draw[->, line width=1.1pt, draw=rhmTeal] (qorigT.north west) .. controls +(-0.20,0.36) and +(-0.32,-0.10) .. (qLL.west);
        \draw[->, line width=1.1pt, draw=rhmTeal] (qLL.east) .. controls +(0.28,0.12) and +(-0.28,0.12) .. (qLR.west);
        \draw[->, line width=1.1pt, draw=rhmTeal] (qLR.east) .. controls +(0.34,-0.24) and +(0.08,0.40) .. (qLRR.east);
        \node[draw=rhmTeal, text=rhmTeal, font=\scriptsize\bfseries, line width=1.1pt, minimum width=4.2mm, minimum height=4.2mm, inner sep=0pt] at (qLRR) {$Z$};
        \node[rhmTeal] at ($(qLL.east)!0.55!(qLR.west)+(0.0,0.30)$) {$\sim m^3$};
        \node[draw=rhmOchre, ellipse, line width=1.1pt, fit=(qLLLL)(qLLLR), inner xsep=0.8mm, inner ysep=0.5mm] (qorigO) {};
        \node[text=rhmOchre, font=\scriptsize\bfseries] at ($(qorigO.west)+(-0.16,0)$) {$T$};
        \node[text=rhmOchre, font=\scriptsize\bfseries] at (qLLL) {$\star$};
        \draw[->, line width=1.1pt, draw=rhmOchre] (qorigO.north west) .. controls +(-0.18,0.30) and +(-0.28,-0.08) .. (qLLL.west);
        \draw[->, line width=1.1pt, draw=rhmOchre] (qLLL.east) .. controls +(0.24,0.10) and +(-0.24,0.10) .. (qLLR.west);
        \draw[->, line width=1.1pt, draw=rhmOchre] (qLLR.east) .. controls +(0.28,-0.22) and +(0.06,0.36) .. (qLLRR.east);
        \node[draw=rhmOchre, text=rhmOchre, font=\scriptsize\bfseries, line width=1.1pt, minimum width=4.2mm, minimum height=4.2mm, inner sep=0pt] at (qLLRR) {$Z$};
        \node[rhmOchre] at ($(qLLL.east)!0.55!(qLLR.west)+(0.0,0.26)$) {$\sim m^3$};
    }
    [, name=qroot
        [, name=qL
            [, name=qLL
                [, name=qLLL
                    [, name=qLLLL]
                    [, name=qLLLR]
                ]
                [, name=qLLR
                    [, name=qLLRL]
                    [, name=qLLRR]
                ]
            ]
            [, name=qLR
                [, name=qLRL
                    [, name=qLRLL]
                    [, name=qLRLR]
                ]
                [, name=qLRR
                    [, name=qLRRL]
                    [, name=qLRRR]
                ]
            ]
        ]
        [, name=qR
            [, name=qRL
                [, name=qRLL
                    [, name=qRLLL]
                    [, name=qRLLR]
                ]
                [, name=qRLR
                    [, name=qRLRL]
                    [, name=qRLRR]
                ]
            ]
            [, name=qRR
                [, name=qRRL
                    [, name=qRRLL]
                    [, name=qRRLR]
                ]
                [, name=qRRR
                    [, name=qRRRL]
                    [, name=qRRRR]
                ]
            ]
        ]
    ]
\end{forest}
        \\[-1mm]
        \Large Supervised classification
        &
        \Large  Token-level SSL (MLM, diffusion)
        &
        \Large \hspace{1.5em} Learn-from-your-latent SSL
        \end{tabular}%
    }
    \captionof{figure}{\textbf{The sample complexity $P$ to learn RHM data depends on training objective}. The sample complexity to learn the synonymic invariance of correlations (and construct latent $\star$) between prediction target ($Z$, boxed) and its context tuple ($T$, circled) scales as $m^{d_{\rm tree}}$, where $d_{\rm tree}$ denotes the tree-distance between the two. Overall sample complexity is bottlenecked by the weakest pertinent correlations, which we highlight for supervised classification and token-level SSL. Supervising on latents, as our ILC algorithm (\Cref{sec:latent-clustering}), SLC network (\Cref{sec:neuralclustering}), and data2vec (\Cref{sec:data2vec}) do, achieves the better sample complexity $\sim m^3$.   \label{fig:rhm_corr_schematic}}
    \vspace{-1em}
\end{center}
\end{figure}

\paragraph{The Random Hierarchy Model (RHM).}

The RHM \citep{cagnetta2024deep} is a probabilistic context-free grammar on a fixed regular tree. The tree has depth \(L\), branching factor \(s\), and vocabularies \(\mathcal V_0,\mathcal V_1,\ldots,\mathcal V_L\), all of size \(v\). Level \(0\) is visible and levels \(1,\ldots,L\) are latent. At level \(\ell\), there are \(s^{L-\ell}\) variables \(h^{(\ell)}_1,\ldots,h^{(\ell)}_{s^{L-\ell}}\), with \(h^{(\ell)}_u\in\mathcal V_\ell\). The visible sequence is \(x_i=h^{(0)}_i\), \(i=1,\ldots,s^L\).

For each level \(\ell=0,\ldots,L-1\), a grammar instance is sampled by choosing \(vm\) distinct tuples from \(\mathcal V_\ell^s\) uniformly without replacement and partitioning them into \(v\) labeled groups of size \(m\), one group \(\mathcal R_{\ell,a}\) for each parent symbol \(a\in\mathcal V_{\ell+1}\). A rule \(\nu=(\nu_1,\ldots,\nu_s)\in\mathcal R_{\ell,a}\) means that parent \(a\) may generate the tuple of children \(\nu\). The set of grammatical tuples at level \(\ell\) is
\(
    \mathcal S_\ell:=\bigsqcup_{a\in\mathcal V_{\ell+1}}\mathcal R_{\ell,a},
\)
so \(|\mathcal S_\ell|=vm\). We write \(f:=m/v^{s-1}\) for the fraction of valid \(s\)-tuples, since \(vm\) out of the \(v^s\) possible tuples are grammatical. Because the rule map is injective, the grammar is unambiguous: each grammatical tuple has a unique parent. We denote it by
\(
    \mathrm{par}_\ell(\nu)=a
    \;\Longleftrightarrow\;
    \nu\in\mathcal R_{\ell,a}.
\)
Two grammatical tuples \(\nu,\nu'\in\mathcal S_\ell\) are called \textit{synonyms} if they have the same parent.

Generation proceeds top-down. First \(h^{(L)}_1\sim\mathrm{Unif}(\mathcal V_L)\). Then, recursively, if \(h^{(\ell+1)}_u=a\), the child tuple
\(
    T^{(\ell)}_u:=\big(h^{(\ell)}_{(u-1)s+1},\ldots,h^{(\ell)}_{us}\big)
\)
is sampled uniformly from \(\mathcal R_{\ell,a}\).

\paragraph{Correlation-based learning in the RHM.}

A central observation from \citet{cagnetta2024deep} is that learning the RHM amounts to learning invariances under the exchange of synonyms. 
The statistical signal that identifies these synonym classes comes from correlations between a tuple and an external observable. Concretely, let \(T^{(\ell)}\) be a level-\(\ell\) tuple and let \(Z\) be an observable elsewhere in the tree. This arrangement is depicted for different learning objectives in \Cref{fig:rhm_corr_schematic}. One can consider the connected correlation:
\[
    C_Z(\nu,z)
    :=
    \mathbb P[T^{(\ell)}=\nu,Z=z]
    -
    \mathbb P[T^{(\ell)}=\nu]\mathbb P[Z=z].
\]
If two tuples \(\nu,\nu'\) are synonyms, then
\(
    C_Z(\nu,\cdot)=C_Z(\nu',\cdot),
\)
whenever \(Z\) depends on the tuple only through its parent.\footnote{Indeed, conditioned on the parent of the tuple, the rest of the tree is independent of which production rule was used to generate the tuple, and the uniform choice of production rules gives $\mathbb P[T^{(\ell)}=\nu]=\mathbb P[T^{(\ell)}=\nu']$ within each synonym class.} The strength of this correlation signal depends on the choice of \(Z\) and on its position relative to the tuple. The signal must propagate through the hidden tree from the parent of \(T^{(\ell)}\) to the observable \(Z\). Along this path, each unresolved production rule averages over \(m\) synonymous choices, attenuating the signal. In sample-complexity terms, each additional unresolved rule costs a factor of order \(m\), as illustrated in \Cref{fig:rhm_corr_schematic}.

{\it Supervised classification \citep{cagnetta2024deep}:} the observable \(Z\) is the class label at the root. To recover the rules mapping level-\(\ell\) tuples to level-\((\ell+1)\) latents, the root-to-tuple correlation must traverse the \(L-\ell-1\) levels separating the tuple parent from the root. Including the fact that each grammatical tuple is observed with probability of order \(1/(vm)\), this gives sample complexities of the form
\(
    P_\ell^{\mathrm{sup}}
    \asymp
    v\,m^{L-\ell}.
\)
The hardest step is therefore the first one, \(\ell=0\), where one must learn the level-1 rules.
Deep networks trained on the supervised RHM classification task were found to saturate this scaling and reconstruct the latent hierarchy in their representations.

{\it Token-level self-supervision \citep{cagnetta2024towards,favero2025compositional, cagnetta2025scaling}:}
 \(Z\) is a masked or
noised token. In the first step, local correlations between visible tokens
identify the level-1 synonym classes, with sample complexity
\(
    P_1^{\mathrm{tok}}
    \asymp
     v m^3.
\)
Once these low-level latents have been internally reconstructed, the model can use them as coarse context variables. To learn higher levels, the relevant statistics are token--latent correlations: a visible token is correlated with a latent representation of its context. However, because the prediction target is still a surface token, the signal is averaged through the descendant channel from the latent scale down to the leaves. Each additional latent level therefore costs one extra factor of \(m\) in sample complexity. Thus, if \(\ell \geq
1\) denotes the latent level being learned above the leaves,
\begin{equation}
    P_\ell^{\mathrm{tok}}
    \asymp
    v\,m^{\ell+2}.
    \label{eq:sample_complexity_token_latent}
\end{equation} 
Neural networks trained with token-level SSL objectives were empirically shown to saturate this staged scaling: as the dataset size increases, lower-level latents appear first, and higher-level latents become learnable thereafter. The overall sample-complexity is gated behind reconstructing latents $\ell = L-1$ leading to $P_{\rm tok}\asymp vm^{L+1}$.\footnote{Notice that since the root is sampled uniformly and each root symbol has \(m\) equiprobable rules, the distribution does not reveal how the \(vm\) valid top-level tuples are partitioned into \(v\) groups of size \(m\), one group per root symbol. Thus unsupervised learning can recover $h^{(1)},h^{(2)},\ldots,h^{(L-1)}$ and the support of valid top-level tuples, but not the root labels themselves.}

\section{Recovering the RHM hierarchy by clustering}
\label{sec:latent-clustering}

\begin{algorithm}[t]
\DontPrintSemicolon
\newcommand{\ilctikzmark}[1]{{\tikz[overlay,remember picture,baseline]\node (#1) {};}}
\newcommand{\ilcwrap}[1]{\parbox[t]{0.76\linewidth}{#1}}
\newcommand*{\AddILCNote}[4]{%
    \begin{tikzpicture}[overlay,remember picture]
        \draw[decoration={brace,amplitude=1.15em},decorate,thick]
            ($(#3)!(#1.north)!($(#3)-(0,1)$) + (1.15em,0)$) --
            ($(#3)!(#2.south)!($(#3)-(0,1)$) + (1.15em,0)$)
            node[align=left,pos=0.5,anchor=west,xshift=1.15em] {#4};
    \end{tikzpicture}%
}
\newcommand*{\PlaceILCNotes}{%
    \makebox[0pt][l]{%
    \raisebox{0pt}[0pt][0pt]{%
        \AddILCNote{ilc-predictor-top}{ilc-predictor-bottom}{ilc-brace-right}{Predictor, $p$}%
        \AddILCNote{ilc-clusterer-top}{ilc-clusterer-bottom}{ilc-brace-right}{Clusterer, $C$}%
    }}%
}
\caption{Iterative Latent Clustering (ILC) --- see \Cref{fig:ilc_schematic} for a graphical representation.}
\label{alg:ilc}
\KwIn{$P$ samples $x^{(1)},\ldots,x^{(P)}$, RHM parameters $L,s,v$, and a clustering module $\mathsf{Cluster}_v$.}
\KwOut{estimated non-root hierarchy $\widehat h^{(1)},\ldots,\widehat h^{(L-1)}$.}
\BlankLine
\textbf{Initialize:} $\widehat h^{(0)}_i = x_i$.\;
\BlankLine
\For{$\ell = 0, 1, \ldots, L-2$}{
  \ilcwrap{\ilctikzmark{ilc-predictor-top}Form all level-$\ell$ tuples $\widehat T^{(\ell)}_u$.\hfill\ilctikzmark{ilc-brace-right}}\;
  \ilcwrap{Estimate the support of grammatical tuples $\widehat{\mathcal S}_\ell$ by the observed tuple set.}\;
  \ilcwrap{For every $\nu\in\widehat{\mathcal S}_\ell$, count its incidence as $N(\nu) = \sum_{p=1}^P
          \mathbf 1\{
              \widehat T^{(p)}_\ell=\nu
          \}$ and compute its empirical cousin context vector}
  \[
      \widehat\phi_\ell(\nu)
      :=
      \frac{1}{N(\nu)}
          \sum_{p=1}^P
          e_{\widehat Z^{(p)}_\ell}
          \mathbf 1\{
              \widehat T^{(p)}_\ell=\nu
          \}
      ,
  \]
  \ilcwrap{where $\widehat T^{(p)}_\ell$ is a fixed tuple at level $\ell$ in sample $p$, and $\widehat Z^{(p)}_\ell$ is a fixed level-$\ell$ element in a cousin tuple (i.e. sharing the $\ell+2$ grandparent with $\widehat T^{(p)}_\ell$).\ilctikzmark{ilc-predictor-bottom}}\;
  \ilcwrap{\ilctikzmark{ilc-clusterer-top}Cluster the context vectors:
  $\widehat{\mathcal S}_{\ell,1},\ldots,\widehat{\mathcal S}_{\ell,v}
  = \mathsf{Cluster}_v\!\left(\{\widehat\phi_\ell(\nu):\nu\in\widehat{\mathcal S}_\ell\}\right).$}\;
  \ilcwrap{Define the next-level latent label by the cluster identity:}
  \[
      \widehat h^{(\ell+1)}_u=a
      \qquad
      \text{if}
      \qquad
      \widehat T^{(\ell)}_u
      \in
      \widehat{\mathcal S}_{\ell,a}.\ilctikzmark{ilc-clusterer-bottom}
  \]\PlaceILCNotes \vspace{-0.75em}
}
\end{algorithm}

As discussed in the previous section, the limitation of token-level objectives is not that they fail to learn latent variables. Rather, they learn them while still receiving supervision through visible tokens. The learn-from-your-own-latent setting studied below 
removes this residual token bottleneck: once level \(\ell\) has been recovered, both the conditioning object and the prediction target are lifted to level \(\ell\). The next stage is then again a local synonym-clustering problem, but now with the same statistical strength at every scale.

In other words, every level becomes as easy as the first token-level step. Given \Cref{eq:sample_complexity_token_latent}, this suggests that the whole non-root hierarchy should be recoverable from \(P\asymp vm^3\) samples.
This is illustrated in \Cref{fig:rhm_corr_schematic}--\textbf{right}.

\paragraph{Iterative latent-clustering algorithm (ILC).} The preceding section framed learning the RHM in terms of learning tuple-target correlations, and identifying \textit{synonyms} by tuples that share identical correlations. Here, we cast this as a vector-clustering problem. 
Let \(T\) denote a level-\(\ell\) tuple, let \(Z\) be a level-\(\ell\) target in a cousin tuple (cf. the arrangement in \Cref{fig:rhm_corr_schematic} -- cousins are simply nodes sharing the same grandparent at level \(\ell+2\)), and let \(e_Z\in\mathbb R^v\) be its one-hot encoding. For each grammatical tuple \(\nu\in\mathcal S_\ell\), define the \textit{conditional context vector}
\[
    \phi_\ell(\nu)
    :=
    \mathbb E[e_Z\mid T=\nu]
    \in\Delta^{v-1}.
\]

\begin{wrapfigure}{br}{0.54\textwidth}
    \vspace{1em}
    \centering 
    \resizebox{\linewidth}{!}{\begingroup
\providecommand{\clusteringSchematicPanelSep}{}
\providecommand{\clusteringSchematicXSep}{}
\providecommand{\clusteringSchematicLevelOneY}{}
\providecommand{\clusteringSchematicLevelTwoY}{}
\providecommand{\clusteringSchematicRootY}{}
\renewcommand{\clusteringSchematicPanelSep}{8mm}
\renewcommand{\clusteringSchematicXSep}{0.80}
\renewcommand{\clusteringSchematicLevelOneY}{-2.3}
\providecommand{\clusteringSchematicPDeltaY}{1.3}
\providecommand{\clusteringSchematicCDeltaY}{0.7}
\renewcommand{\clusteringSchematicLevelTwoY}{-0.25}
\renewcommand{\clusteringSchematicRootY}{0.6}
\definecolor{ptcOchre}{RGB}{184,119,0}
\definecolor{ptcTeal}{RGB}{0,128,128}
\definecolor{ptcViolet}{RGB}{118,80,160}
\definecolor{ptcBlue}{RGB}{42,111,180}
\definecolor{ptcRed}{RGB}{190,72,72}
\definecolor{rhmTeal}{RGB}{0,128,128}
\definecolor{rhmViolet}{RGB}{118,80,160}
\providecommand{\clusteringSchematicXSep}{0.80}
\providecommand{\clusteringSchematicLevelOneY}{-1.20}
\providecommand{\clusteringSchematicLevelTwoY}{1.75}
\providecommand{\clusteringSchematicRootY}{2.12}
\begin{tikzpicture}[
    >=Stealth,
    node dot/.style={circle, draw, thick, fill=white, inner sep=0pt, minimum size=5.6mm},
    leaf dot/.style={circle, draw, thick, fill=white, inner sep=0pt, minimum size=4.6mm},
    module/.style={regular polygon, regular polygon sides=3, shape border rotate=0, draw, thick, fill=white, inner sep=0pt, minimum size=7.2mm, font=\footnotesize},
    target/.style={draw, thick, fill=white, minimum width=13mm, minimum height=6.4mm, inner sep=0pt},
    latent/.style={circle, draw, thick, fill=white, inner sep=0pt, minimum size=3.2mm},
    tree edge/.style={line width=0.8pt},
    signal/.style={->, line width=1.1pt}
]
    \path[use as bounding box] (-2,-4.55) rectangle (3.85,0.0);
    \IfFileExists{figs/clustering_schematic_ptc_module_panel.tikz}{\input{figs/clustering_schematic_ptc_module_panel.tikz}}{\input{clustering_schematic_ptc_module_panel.tikz}}
\end{tikzpicture}
\endgroup}
    \caption{\textbf{Graphical representation of \Cref{alg:ilc} for $L=3$}. Predictor $p$ implements steps 3-5, and clusterer $C$ constructs next-level latents. Highlighted prediction targets are as in \Cref{fig:rhm_corr_schematic}--\textbf{right}. \label{fig:ilc_schematic}}
\end{wrapfigure}

As before, \textbf{the essential observation is that synonyms have the same context vector}. If \(\nu\in\mathcal S_{\ell,a}\), we denote the common context vector of parent \(a\) by
\( 
    \Phi_{\ell,a}
    :=
    \phi_\ell(\nu)\).
The goal is to cluster the \(vm\) tuple context vectors into these \(v\) parent centers. This assignment of tuples into clusters explicitly constructs the next-level latents.

\Cref{alg:ilc} (Iterative Latent Clustering, or ILC) operationalizes this procedure. At each step, the algorithm assumes that level \(\ell\) has been decoded, clusters level-\(\ell\) tuples by their empirical context vectors, and thereby constructs level \(\ell+1\). This clustering procedure is illustrated graphically in \Cref{fig:ilc_schematic}: 
the predictor \(p\) estimates context vectors (steps 3-5), while the clusterer \(C\) maps these vectors to next-level latent labels (steps 6-7).
We prove its sample complexity in \Cref{thm:iterative-latent-clustering} and validate it numerically in \Cref{fig:kmeans_and_slc}. 

\paragraph{Theoretical sample complexity.} 
An RHM grammar is \emph{balanced} if, for every level $\ell$, every grammatical tuple occurs with probability of order $1/(vm)$. It is \emph{separated} if every pair of parent context vector \( \Phi_{\ell,a}, \Phi_{\ell,b} \) are separated by distance \( \gtrsim 1/m\). We refer the reader to Appendix~\ref{app:clustering-proof} for the formal statements of these assumptions and their justification in the limit of $v\to\infty$ at fixed $f \in(0,1)$.
For a balanced and separated RHM grammar with $f<1$, and assuming the existence of a \emph{stable} clustering module (see Appendix~\ref{app:clustering-proof}), we have: 

\begin{theorem}[Recovery of the non-root hierarchy; informal]
\label{thm:iterative-latent-clustering}
Fix an RHM grammar that is balanced and separated, and a clustering algorithm stable under perturbations. Then there exists a constant \(C>0\), depending only on the constants in those assumptions and on \(s\), such that the following holds. If
\begin{equation}
\label{eq:full-recovery-sample-complexity}
    P
    \geq
    C
    \left[
        vm\log\frac{Lvm}{\delta}
        +
        \frac{v m^3}{1-f}
        \log\frac{Lvm}{\delta}
    \right],
\end{equation}
then the iterative latent-clustering algorithm recovers
\(
    h^{(1)},h^{(2)},\ldots,h^{(L-1)}
\)
and all production-rule classes below the root with probability at least \(1-\delta\), up to independent permutations of the latent labels at each level.

In particular, for fixed \(f\) bounded away from \(1\) and up to logarithmic factors,
\(
    P_{\mathrm{ILC}}
    \asymp
    v m^3.
\)
\end{theorem}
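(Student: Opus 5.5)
The plan is an induction over levels $\ell=0,1,\ldots,L-2$ with a union bound. The inductive claim $\mathcal E_\ell$ is that $\widehat h^{(\ell)}$ coincides with $h^{(\ell)}$ on every sample, up to a fixed permutation $\pi_\ell$ of $\mathcal V_\ell$. The base case $\mathcal E_0$ is trivial, since $\widehat h^{(0)}=x$.

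On $\mathcal E_\ell$, the relabelled tuples $\widehat T^{(\ell)}$ and cousin targets $\widehat Z_\ell$ are exactly the true ones, up to $\pi_\ell$. Stage $\ell$ is therefore an i.i.d.\ estimation problem with the same law at every level: the law of a level-$\ell$ tuple together with a level-$\ell$ cousin element. Crucially, this law does not depend on $L$ or $\ell$ beyond the grammar instance, because the assumptions are uniform in $\ell$. I will show that each stage fails with probability at most $\delta/L$. The $\log(Lvm/\delta)$ factors then come from a union bound over the $L$ stages and the $vm$ tuples.

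Each stage has three steps.

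\emph{(i) Support recovery.} By balance, every $\nu\in\mathcal S_\ell$ has probability at least $c/(vm)$. The probability that some $\nu$ is unseen is at most $vm(1-c/(vm))^P$, which is below $\delta/(3L)$ once $P\gtrsim vm\log(Lvm/\delta)$. This gives the first term in \eqref{eq:full-recovery-sample-complexity}. By the same multiplicative Chernoff argument, every count satisfies $N(\nu)\ge cP/(2vm)$.

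\emph{(ii) Concentration of context vectors.} Conditional on $N(\nu)$, the vector $\widehat\phi_\ell(\nu)$ is an average of $N(\nu)$ i.i.d.\ one-hot vectors with mean $\phi_\ell(\nu)=\Phi_{\ell,\mathrm{par}(\nu)}$. This holds because the cousin target depends on $\nu$ only through its parent, and different samples are independent.

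I would bound $\|\widehat\phi_\ell(\nu)-\phi_\ell(\nu)\|_2$ by a vector Bernstein or McDiarmid inequality. The mean squared error is $(1-\|\phi\|^2)/N$. The key point is that $\phi$ is close to uniform on its support: a cousin element is spread over roughly $v$ symbols, each with weight about $1/v$. So the total variance is $O(1)$, and the natural tail is $\sqrt{\log(\cdot)/N}$. Requiring this deviation to be at most $\eta/m$, where $\eta$ is the stability radius of the clustering module and $1/m$ is the separation scale, gives $N\gtrsim m^2\log(Lvm/\delta)$. Combined with $N\gtrsim P/(vm)$, this yields $P\gtrsim vm^3\log(\cdot)$.

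The factor $1/(1-f)$ should enter through the separation constant. The context vectors are derived from the grammar, and collisions within the valid-tuple fraction $f$ reduce the contrast between parent centers. Concretely, the gap $\|\Phi_{\ell,a}-\Phi_{\ell,b}\|$ is of order $\sqrt{(1-f)}/m$, or the variance carries a factor $1/(1-f)$ relative to it. I would take whichever form the appendix's formal separation assumption gives and propagate it, so that the required $N$ is about $m^2/(1-f)$.

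\emph{(iii) Clustering.} Stability of $\mathsf{Cluster}_v$ means the following. If every point lies within $\eta\cdot\min_{a\neq b}\|\Phi_{\ell,a}-\Phi_{\ell,b}\|$ of its true center, the output partition equals the true synonym partition $\{\mathcal R_{\ell,a}\}$, up to relabelling. Applying this gives $\mathcal E_{\ell+1}$ with a new permutation $\pi_{\ell+1}$, and recovers the production-rule classes at level $\ell$.

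Iterating up to $\ell=L-2$ recovers $h^{(1)},\ldots,h^{(L-1)}$, and the union bound gives total failure probability at most $\delta$. Only the top-level partition remains undetermined, in line with the footnote on root labels.

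The main obstacle is step (ii). It requires getting the concentration at scale $1/m$ with only $O(1)$ total variance, rather than the naive $\sqrt{v/N}$ bound that coordinatewise Hoeffding would give, and correctly tracking where $1/(1-f)$ arises. I would first try the vector Bernstein inequality in $\ell_2$, using $\mathbb E\|e_Z-\phi\|^2\le 1$ and $\|e_Z-\phi\|\le 2$. If the separation is stated in $\ell_2$, this should suffice directly.
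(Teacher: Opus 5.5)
Your proposal is correct and follows essentially the paper's proof. The shared steps are Chernoff lower bounds $N_{\ell,\nu}\gtrsim P/(vm)$ on the counts, and a dimension-free $\sqrt{\log(1/\eta)/N}$ concentration of the one-hot averages (the paper uses McDiarmid with $\mathbb E\|\bar U\|_2\le N^{-1/2}$). This deviation is then matched against the separation $c_{\rm sep}\sqrt{1-f}/m$ with stability radius $\Delta/8$, followed by induction over levels.

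The only difference is organizational. The paper defines one good event on oracle context vectors built from the true latents at all levels and then runs the induction deterministically. This avoids the conditioning on $\mathcal E_\ell$ implicit in your stage-by-stage phrasing, which matters because the same $P$ samples are reused at every level.
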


In Appendix~\ref{app:clustering-proof}, we give the formal theorem and its proof, which is a level-by-level induction based on concentration of the empirical context vectors. Notice that \(v m^3\) is the scale needed to learn the first-level rules from visible tokens (see \Cref{sec:rhm}). However, once one level has been recovered, the decoded latent sequence is again an RHM with the same local parameters. The same sample scale therefore suffices at every subsequent level, allowing the algorithm to recover the non-root hierarchy without any exponential dependence on \(L\), as illustrated in \Cref{fig:rhm_corr_schematic}--\textbf{right}.

\paragraph{Numerical validation.}

We generate samples from the fixed-tree RHM and apply the level-by-level procedure described above. At each level, we estimate the empirical context vectors \(\widehat \phi_\ell(\nu)\) for observed grammatical tuples and cluster them using \(k\)-means. The recovered labels are matched to the ground-truth latent labels by the optimal permutation, and we report the resulting reconstruction accuracy. Figure~\ref{fig:kmeans_and_slc}--\textbf{left} shows the reconstruction accuracy for \(L=5\), \(s=3\), \(v=8\), and varying \(m\). In the unrescaled plot, the transition to perfect reconstruction shifts to larger sample sizes as \(m\) increases. After rescaling the number of samples by
\(
     v m^3,
\)
the curves collapse and reconstruction becomes accurate once \(P/ v m^3\) %
is of order one. This confirms the predicted \(m^3\) scaling of the local clustering threshold. The algorithm recovers all non-root levels at this same scale, consistent with the theory that the first-level recovery threshold is sufficient to climb the hierarchy recursively.\looseness=-1

\begin{figure}
    \includegraphics[width=\linewidth]{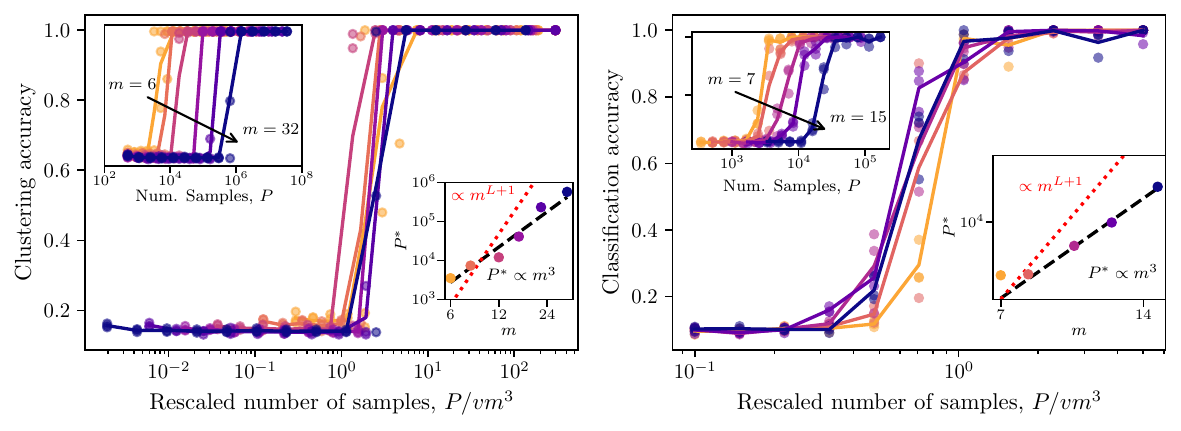}      
    \caption{
    \textbf{Recursive clustering recovers the RHM hierarchy at the predicted scale.} 
    \textbf{Left}: Clustering reconstruction accuracy of the iterative clustering algorithm using \(k\)-means ($L=5,s=3,v=8$). \textbf{Right}: Class reconstruction accuracy of stacked SLC architecture ($L=4,s=3,v=10$) trained end-to-end, evaluated by linear probe on the highest level latents $\widehat{h}^{(L-1)}$. In each panel, the main plot is a  {\it scaling collapse},  effected by rescaling number of samples by predicted sample complexity $P\sim vm^3$. Insets show raw data (top left) and inferred sample complexity (bottom right)  $P^\ast$ for which accuracy reaches 50\%.\looseness=-1    }
    \label{fig:kmeans_and_slc}
    \vspace{-1em}
\end{figure}

\section{Gradient-based Iterative Latent Clustering}
\label{sec:neuralclustering}

In this section, we ask whether the same scaling of the ILC algorithm can be achieved by a differentiable architecture trained end-to-end by gradient descent.

\paragraph{The Stacked Latent-Clustering (SLC) network.}
We mirror the recursion of \Cref{alg:ilc} with a stack of $L-1$ identical modules (\Cref{fig:ilc_schematic}). Each module is composed of two subnetworks that play the roles of the two operations in the algorithm. 
\begin{itemize}
    \item $p$: A \emph{predictor} $\mathrm{Pred}^{(\ell)}$ takes an $s$-tuple of level-$\ell$ tokens and outputs, via cross-entropy training, a categorical distribution over its cousin tokens; this is the neural analog of the empirical cousin context vector. 
    \item $C$: A \emph{clusterer} $\mathrm{Clust}^{(\ell)}$ then maps each prediction vector to a soft assignment in a discrete codebook through a contrastive objective: prediction vectors with high similarity are pulled to the same code while dissimilar ones are pushed apart -- implementing $\mathsf{Cluster}_v$ in the neural setting. 
\end{itemize}

The cluster assignments $\widehat{h}^{(\ell+1)}$ produced by level $\ell$ become the input tokens of the level-$(\ell+1)$ module. To keep the difficulty of the prediction task constant across depth, cluster outputs are tokenized with a softmax. Prediction targets are computed by a teacher network whose weights track an exponential moving average of the student,
\(
    W_\mathrm{teacher} \leftarrow (1-\alpha_\mathrm{ema})\,W_\mathrm{teacher}
    + \alpha_\mathrm{ema}\,W_\mathrm{student},
\)
a common trick to  prevent the representation collapse common to self-distilled SSL \cite{balestriero2023cookbook}. 
Full architectural and training details are given in Appendix~\ref{sec:app:slc_arch_details}.

\paragraph{Results.}

\Cref{fig:kmeans_and_slc}--\textbf{right} shows the accuracy of a linear probe trained to recover the top-level latent from the frozen final-layer representations of an SLC network, for $L=4$, $s=3$, $v=10$, and varying $m$. Rescaling the sample axis by $vm^3$ collapses the curves, confirming that the network saturates the clustering threshold of \Cref{thm:iterative-latent-clustering}. \Cref{fig:slc_L_scaling} shows that the sample complexity for $L\in \{3, \ldots, 7 \}$ does not vary.
Interestingly, inserting a stop-gradient at every module boundary (or even between $p$ and $C$), so that each level is trained \emph{only} by its own prediction and clustering losses, leaves the $vm^3$ scaling unchanged
(\Cref{fig:slc_stop_grads}). SLC therefore admits a strictly local learning rule, consistent with biologically motivated learning schemes.

SLC shows that an \emph{explicit} hierarchical architecture, with one
predictor-clusterer module per latent level, can saturate the $vm^3$ statistical bound of \Cref{sec:latent-clustering}. In the next section, we demonstrate that the SSL representation learning objective of data2vec leads to \emph{implicit} hierarchical supervision, and correspondingly also saturates this learning bound.

\section{Analysis of data2vec}
\label{sec:data2vec}

\begin{figure}
    \vspace{-2em}
    \centering
    \raisebox{-0.5\height}{\resizebox{!}{4cm}{\begin{tikzpicture}[
    >={Stealth[length=2.0mm]},
    every node/.style={font=\small},
    rep/.style={ellipse, draw, minimum width=18mm, minimum height=5mm, inner sep=1pt},
    token/.style={minimum width=5mm, minimum height=5mm, inner sep=1pt, font=\scriptsize},
    masked/.style={token, draw, rounded corners=1pt, fill=gray!15},
    pred/.style={circle, draw, minimum size=9mm, inner sep=0pt, fill=white, font=\footnotesize},
    loss/.style={draw, rectangle, rounded corners=1pt, minimum width=9mm, minimum height=9mm, inner sep=1pt, fill=gray!7, font=\footnotesize},
    avg/.style={circle, draw, minimum size=9mm, inner sep=0pt, font=\scriptsize, align=center, fill=white},
    arr/.style={->, thick},
    targ/.style={<-, thick}
]
    \path[use as bounding box] (-0.55,-1.00) rectangle (5.90,4.6);
    \node[font=\bfseries] at (2.75,4.30) {data2vec};
    \node[align=center, font=\footnotesize] at (0.65,4.25) {student\\representations};
    \node[align=center, font=\footnotesize] at (4.85,4.25) {teacher\\representations};

    \foreach \x/\sym/\idx in {-0.45/a/1,0.10/b/2,0.65/?/3,1.20/d/4,1.75/?/5} {
        \ifnum\idx=3
            \node[masked] (s\idx) at (\x,-0.32) {?};
        \else
            \ifnum\idx=5
                \node[masked] (s\idx) at (\x,-0.32) {?};
            \else
                \node[token] (s\idx) at (\x,-0.32) {$\sym$};
            \fi
        \fi
        \node[below=2pt of s\idx] {$x_\idx$};
    }
    \foreach \x/\sym/\idx in {3.75/a/1,4.30/b/2,4.85/c/3,5.40/d/4,5.95/e/5} {
        \node[token] (t\idx) at (\x,-0.32) {$\sym$};
        \node[below=2pt of t\idx] {$x_\idx$};
    }

    \foreach \y/\name/\lab in {3.25/l1/N,2.35/l2/{N-1},1.45/l3/{N-2},0.55/l4/1} {
        \node[rep] (\name s) at (0.65,\y) {$\lab$};
        \node[rep] (\name t) at (4.85,\y) {$\lab$};
    }
    \draw[arr] (l4s.north) -- (l3s.south);
    \draw[arr] (l3s.north) -- (l2s.south);
    \draw[arr] (l2s.north) -- (l1s.south);
    \draw[arr] (l4t.north) -- (l3t.south);
    \draw[arr] (l3t.north) -- (l2t.south);
    \draw[arr] (l2t.north) -- (l1t.south);
    \node[font=\small\bfseries] at (1.00,1.08) {$\vdots$};
    \node[font=\small\bfseries] at (5.20,1.08) {$\vdots$};

    \node[pred] (pred) at (2.55,3.15) {Pred};
    \node[loss] (mse) at (2.55,1.9) {$\ell_\mathrm{d2v}$};
    \node[avg] (avg) at (2.55,0.75) {avg.\\last $K$};

    \draw[arr] (l1s.east) -- (pred.west);
    \draw[arr] (pred.south) -- (mse.north);
    \draw[arr] (avg.north) -- (mse.south);
    \draw[arr] (l1t.west) .. controls (3.95,3.00) and (3.20,2.00) .. (avg.east);
    \draw[arr] (l2t.west) .. controls (3.90,2.30) and (3.20,1.70) .. (avg.east);
    \draw[arr] (l3t.west) .. controls (3.90,1.45) and (3.20,1.35) .. (avg.east);
    \draw[arr] (s3.north) -- (l4s.south);
    \draw[arr] (t3.north) -- (l4t.south);
\end{tikzpicture}}}%
    \hspace{2em}%
    \raisebox{-0.5\height}{\includegraphics[height=5.5cm]{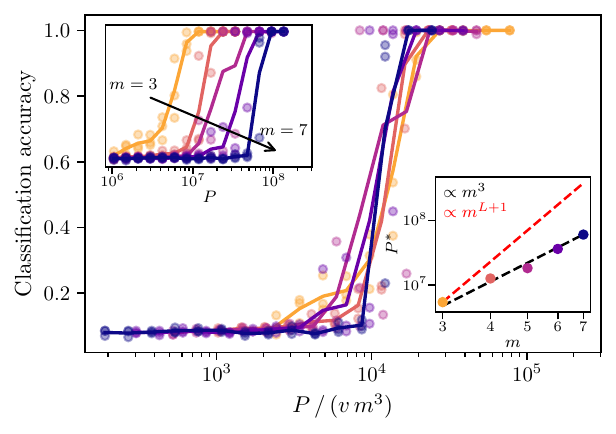}}
    \caption{\textbf{Data2vec on RHM.} 
    \textbf{Left:} data2vec learns in a teacher-student setting; the student is supervised on reconstruction of pooled teacher representations at masked token positions. 
    \textbf{Right:} Root-classification accuracy of probes on pooled final-layer features, as a
    function of the number of online pretraining samples $P$. RHM parameters: $v=16$, $s=2$, $L=4$. Inset: raw curves vs.\ $P$ and inferred sample complexity $P^\ast$ for which accuracy reaches 50\%. Main: rescaling by $vm^3$, curves collapse. Lines are averages over three independent realizations, dots are individual runs.
    }
    \vspace{-1em}
    \label{fig:d2vec_online_class}
\end{figure}

In this section, we argue that the iterative clustering mechanism is \emph{implicit} in data2vec~\cite{baevskiData2vecGeneralFramework2022}, a popular SSL method that predicts its own latent representations. We choose data2vec over the closely related JEPA family because it comes with a published recipe for discrete-token inputs. In particular, data2vec uses a student-teacher setup. The student network is fed a partially masked input $S$; the teacher sees the unmasked input $x$. At every masked position $i$, the student is trained, with a squared loss, to predict the target $Y_i(x)$ given by the average of the last $K$ block activations of a teacher network applied to the unmasked input $x$, as illustrated in \Cref{fig:d2vec_online_class}--\textbf{left}. As with SLC, the teacher has the same architecture as the student, and its weights track an exponential moving average of the student's weights, updated after each gradient step.

\paragraph{Empirical sample complexity.}

We pretrain data2vec on the RHM.  We follow the data2vec text recipe: a fraction $p_{\rm mask}=0.15$ of positions are masked. Full details are presented in Appendix~\ref{app:data2vec}. We compare two data-presentation regimes: \emph{online}, where each step draws a fresh batch from the grammar, and \emph{offline}, where a fixed training set of $P$ unique strings is reused across epochs. After pretraining, we freeze the encoder and train a one-hidden-layer MLP probe to predict the root label from mean-pooled final-layer features.

\Cref{fig:d2vec_online_class}--\textbf{right} shows the downstream root-classification accuracy of the probes for $L=4$, $s=2$, $v=16$ in the online setting.\footnote{The same plots are reported in Appendix~\ref{app:data2vec} for the offline setting.} The learning curves collapse onto a single curve after rescaling the sample axis by $vm^3$. Root classification on the RHM requires resolving all $L-1$ non-root latent levels and then identifying the partition of top-level tuples into root classes. As discussed in \Cref{sec:rhm}, token-level SSL is bottlenecked by learning the level-$(L-1)$ rules from tokens, which costs $\sim vm^{L+1}=vm^5$ samples; an additional $\mathcal O(vm)$ labeled examples then suffice to learn the root partition.\footnote{For reference, fully supervised learning of the root from leaves would itself require $\sim vm^L=vm^4$ samples, so the $vm^3$ scaling we observe is in fact better than what supervised learning achieves -- and data2vec uses no labels during pretraining.} The observed scaling is hence incompatible with this token-level prediction baseline.
 
\paragraph{Theoretical analysis.}
 
We now explain the $vm^3$ scaling by tying data2vec back to the iterative latent clustering mechanism of \Cref{sec:latent-clustering}. Although the target $Y_i(x)$ is a continuous vector, we posit it acts as a soft encoding of the RHM latents that the encoder has already learned: at any point in training, $Y_i(x)$ contains a linear component of every latent learned in the encoder (as we verify in \Cref{fig:app:d2vec_levels}). We idealize the slow EMA as a discretized sequence of phases: within each phase, the teacher is held fixed, and between phases it is refreshed to absorb the latents most recently acquired by the student. The argument then proceeds by induction over the levels. Phase $0$ reduces data2vec training to the token-level masked-prediction problem of \Cref{sec:rhm} and recovers the level-$1$ latents at $P\gtrsim vm^3$. Each subsequent phase \emph{lifts} the prediction problem onto level-$\ell$ tuples, where it coincides with the cousin-tuple clustering problem of \Cref{sec:latent-clustering} -- with the same sample complexity. The full non-root hierarchy is therefore recovered at this scale. We now formalize this argument through two assumptions.

\emph{(A1) Targets carry the learned latents.} Let $z_i^{(\ell)}:=h^{(\ell)}_{\lceil i/s^\ell\rceil}$ be the level-$\ell$ ancestor of position $i$, and let $e_z\in\mathbb R^v$ denote the one-hot encoding of a latent symbol $z$. If the encoder linearly represents the latents $h^{(1)},\ldots,h^{(\ell)}$, then the teacher target admits a decomposition
\[
    Y_i(x)
    \;=\;
    F_i(S)
    \;+\;
    \sum_{a=0}^{\ell}
        B_a\,e_{z_i^{(a)}}
    \;+\;
    {\rm residual},
\]
where $F_i(S)$ collects components determined by the visible input and the matrices $B_a$ are non-zero linear projections. The residual term collects everything else, which is typically nonlinear and not used in our analysis. The residual architecture of the transformer makes this natural: a feature decodable from one block propagates through the identity path to every later block, and hence appears with a non-zero linear coefficient in the layers entering the top-$K$ teacher average. At initialization, $\ell=0$ and the only ancestor is $x_i$ itself, so $Y_i(x)$ contains a linear component of the masked one-hot inputs.\looseness=-1

\emph{(A2) Correlation learning.} Whenever a correlation between the prediction target and a feature of the visible input becomes detectable above sampling noise, gradient descent extracts that feature. This is the same correlation-learning hypothesis used in previous RHM analyses \cite{cagnetta2024towards,favero2025compositional}, where it is supported empirically and by one-step gradient calculations.

\paragraph{Phase 0.} The population minimizer of the data2vec loss is the conditional expectation $\mathbb E[Y_i(x)\mid S]$. By linearity of conditional expectation, in phase $0$, predicting $Y_i(x)$ from $S$ reduces, up to the visible component and the non-linear residual, to predicting the masked one-hot $e_{x_i}$ from its visible context. This is exactly the token-level masked-prediction problem analyzed in \Cref{sec:rhm}: local token--token correlations become detectable above $P\gtrsim vm^3$, and by (A2) the network extracts them. The resulting representation collapses synonymous level-$0$ tuples onto a common image, and the level-$1$ ancestor becomes linearly decodable from the encoder.

\paragraph{Phase $\ell\geq 1$.} Suppose phases $0,\ldots,\ell-1$ have produced linearly decodable representations of $h^{(1)},\ldots,h^{(\ell)}$ inside the encoder. After the next teacher update, these latents enter the target, and (A1) gives the new component $\mathbb E[e_{z_i^{(\ell)}}\mid S]$ in the optimal predictor. Because levels $1,\ldots,\ell$ are decodable, the visible context $S$ can be parsed into level-$\ell$ symbols away from the masked position. In particular, it contains the level-$\ell$ cousin tuple $T_\ell$, the closest level-$\ell$ tuple sitting outside the production rule of position $i$. Conditioning on $T_\ell$ produces the context vector $\phi_\ell(\nu)=\mathbb{E}[e_{z_i^{(\ell)}}\mid T_\ell=\nu]$ studied in \Cref{sec:latent-clustering}. The prediction problem at scale $\ell$ has \emph{lifted} from leaves to level-$\ell$ tuples, and is now identical to the clustering problem solved by the ILC algorithm of that section, requiring $P\gtrsim vm^3$. When the network extracts this signal, the level-$(\ell+1)$ ancestor $z_i^{(\ell+1)}$ becomes linearly decodable, and the next teacher update carries it into the target.

The induction continues up to $\ell=L-2$. Every phase shares the same sample complexity. Since the same training set is reused across phases, the offline sample complexity equals the per-phase threshold,
\(P_{\rm d2v} \;\asymp\; vm^3\),
and depth enters only through the number of phases, not through the per-phase requirement. 

\begin{figure}
    \centering
    \includegraphics[width=0.85\linewidth]{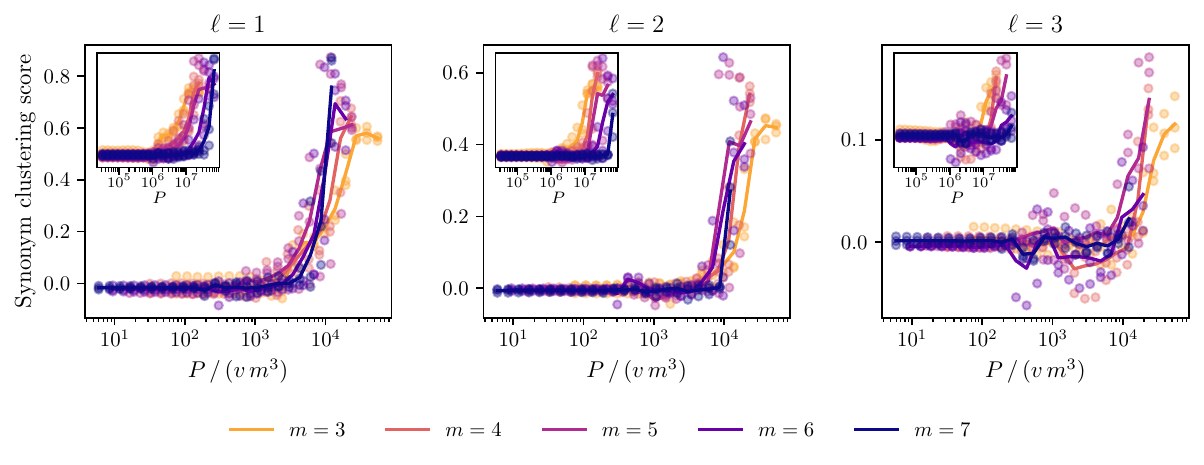}
    \caption{\textbf{Data2vec clusters synonyms.} Synonym clustering score $\mathcal C_\ell$ of the encoder at $\ell\in\{1,2,3\}$, for $L=4$, $s=2$, $v=16$. A positive score means synonyms are mapped closer together than non-synonyms. \emph{Insets:} raw curves vs. $P$. \emph{Main:} same curves rescaled by $vm^3$ collapse.}
    \label{fig:d2vec-clustering}
\end{figure}

\paragraph{The encoder internally clusters synonyms.}

Our theoretical argument predicts that the data2vec encoder collapses synonymous level-$\ell$ tuples onto a common representation -- the very geometric signature of clustering. We test this prediction directly by measuring the \emph{synonym clustering score} $\mathcal C_\ell$ of the encoder at level $\ell$ by comparing the change in its hidden representations under two interventions on the input: a \emph{synonym swap}, which resamples every level-$\ell$ production rule and thus replaces each level-$(\ell-1)$ tuple by a synonym while leaving higher-level latents unchanged, and a \emph{generic swap} to an unrelated input. The score is normalized so that $\mathcal C_\ell=0$ when synonym swaps move the representation as much as generic swaps (no clustering), and $\mathcal C_\ell=1$ when synonyms are mapped to a common image (perfect clustering).
\Cref{fig:d2vec-clustering} reports $\mathcal C_\ell$ at each non-root level after the third block of the encoder as a function of pretraining samples $P$. The scores at all three levels rise from zero and collapse onto a common curve once the sample axis is rescaled by $vm^3$. This is direct evidence that data2vec's encoder implements the same recursive clustering mechanism as the ILC algorithm of \Cref{sec:latent-clustering}, with the same per-level $vm^3$ sample complexity predicted by the theory.\looseness=-1

\section{Conclusion}

We have proved that learning from one's own latents recovers the full non-root latent tree of the Random Hierarchy Model from a number of samples scaling as $m^3$, exponentially fewer than the $m^{L+1}$ required by token-level self-supervised objectives. We confirmed this prediction with a hierarchical clustering algorithm, an end-to-end neural network of predictor-clusterer modules, and the first sample-complexity analysis of data2vec. The main lesson is that latents at the same level of the hierarchy are far more correlated with each other than they are with raw tokens, so predicting from one's own latents amplifies a signal that token-level prediction dilutes. This places on a firm quantitative footing the common intuition that token-level prediction is suboptimal.

\paragraph{Practical implications.}
Recent work supports that empirical neural scaling laws in language are governed by power-law decays of token correlations with context length~\citep{cagnetta2024towards, cagnettaDerivingNeuralScaling2026, favero2025compositional}. Can generative models trained with own-latent self-supervision systematically beat existing scaling laws, alone or in combination with token-level losses? A useful first step would be a controlled comparison between data2vec and a same-architecture next-token-prediction baseline as the training-set size $P$ is varied: at small $P$ the two should diverge sharply, while at very large $P$ they should converge to the same latent representations. Such a comparison would empirically test our central prediction and, if confirmed, motivate latent-supervised generative models as a route to break current scaling regimes.

\paragraph{Biological plausibility.}
As we show in Appendix~\ref{app:sec:slc_local_rules}, the predictor-clusterer modules of \Cref{sec:neuralclustering} can also reach the $m^3$ scaling of sample complexity using only quasilocal learning rules -- stop-gradients between modules in place of end-to-end backpropagation. In this local learning setting our method is conceptually similar to contrastive predictive coding~\citep{oordRepresentationLearningContrastive2019a} and CLAPP~\citep{illingLocalPlasticityRules2021}, which both propose biologically plausible self-supervised objectives that predict in latent rather than raw-sensory space. This raises the question of whether the remarkable sample efficiency of the human brain stems from a similar mechanism. Very recently, CLAPP was trained on the RHM in a  setting where information on the top root label was given, recovering the supervised learning sample complexity $m^L$ -- see \cref{sec:app:further_related_works} for details. 
It would be interesting to test whether such bio-plausible architectures can reach a $m^3$ sample complexity on the RHM simply by learning from their own latent.

\paragraph{Limitations.}
The RHM is a deliberately simple synthetic language: its context-free grammar has a fixed tree topology and unambiguous, non-recursive production rules. Extending the analysis to (i) variable tree topologies, (ii) recursive production rules that can call themselves, and (iii) explicitly context-dependent rules will bring the theory closer to natural language and images. Such extensions are likely important to characterize precisely the relative strengths and weaknesses of various architectures. Although our analysis suggests that stacking JEPAs as in H-JEPA is largely redundant, very recent self-supervised architectures claim explicit hierarchical supervision but are neither stacked JEPAs nor stacked data2vecs, and instead resemble data2vec-style training procedures with multi-scale teachers~\citep{loweSelfdistillationHiddenLayers2026a, maesLeWorldModelStableEndtoend2026}. These architectures would be interesting to study theoretically. More broadly, developing a suite of synthetic data models will provide a laboratory in which new architectures can be designed, tested, and theoretically understood before being scaled up.

\begin{ack}
We thank Francesco Cagnetta, Francesco D'Amico, Ariane Delrocq, Antonio Sclocchi, and Wu Zihan for discussions and feedback on the manuscript, and the members of the Simons collaboration for discussions. D. J. Korchinski acknowledges financial support from the Natural Sciences and Engineering Research Council of Canada (NSERC PDF - 587940 - 2024). A. Favero acknowledges support from the Infosys-Cambridge AI Centre. This work was supported by the Simons Foundation through the Simons Collaboration on the Physics of Learning and Neural Computation (Award ID: SFI-MPS-POL00012574-05), PI  Wyart.
\end{ack}

\newpage 
\bibliographystyle{unsrtnat}
\bibliography{references}

\begin{thebibliography}{65}
\providecommand{\natexlab}[1]{#1}
\providecommand{\url}[1]{\texttt{#1}}
\expandafter\ifx\csname urlstyle\endcsname\relax
  \providecommand{\doi}[1]{doi: #1}\else
  \providecommand{\doi}{doi: \begingroup \urlstyle{rm}\Url}\fi

\bibitem[Ho et~al.(2020)Ho, Jain, and Abbeel]{ho2020denoising}
Jonathan Ho, Ajay Jain, and Pieter Abbeel.
\newblock Denoising diffusion probabilistic models.
\newblock In \emph{Advances in Neural Information Processing Systems}, 2020.

\bibitem[Rombach et~al.(2022)Rombach, Blattmann, Lorenz, Esser, and Ommer]{rombach2022high}
Robin Rombach, Andreas Blattmann, Dominik Lorenz, Patrick Esser, and Bj{\"o}rn Ommer.
\newblock High-resolution image synthesis with latent diffusion models.
\newblock In \emph{IEEE/CVF Conference on Computer Vision and Pattern Recognition (CVPR)}, 2022.

\bibitem[Brooks et~al.(2024)Brooks, Peebles, Holmes, DePue, Guo, Jing, Schnurr, Taylor, Luhman, Luhman, Ng, Wang, and Ramesh]{brooks2024video}
Tim Brooks, Bill Peebles, Connor Holmes, Will DePue, Yufei Guo, Li~Jing, David Schnurr, Joe Taylor, Troy Luhman, Eric Luhman, Clarence Wing~Yin Ng, Ricky Wang, and Aditya Ramesh.
\newblock Video generation models as world simulators.
\newblock OpenAI Technical Report, 2024.
\newblock URL \url{https://openai.com/research/video-generation-models-as-world-simulators}.

\bibitem[Brown et~al.(2020)Brown, Mann, Ryder, Subbiah, Kaplan, Dhariwal, Neelakantan, et~al.]{brown2020language}
Tom~B. Brown, Benjamin Mann, Nick Ryder, Melanie Subbiah, Jared Kaplan, Prafulla Dhariwal, Arvind Neelakantan, et~al.
\newblock Language models are few-shot learners.
\newblock In \emph{Advances in Neural Information Processing Systems}, 2020.

\bibitem[{OpenAI}(2023)]{openai2023gpt4}
{OpenAI}.
\newblock {GPT-4} technical report.
\newblock \emph{arXiv preprint arXiv:2303.08774}, 2023.

\bibitem[{DeepSeek-AI}(2025)]{deepseekai2025r1}
{DeepSeek-AI}.
\newblock {DeepSeek-R1}: Incentivizing reasoning capability in {LLMs} via reinforcement learning.
\newblock \emph{arXiv preprint arXiv:2501.12948}, 2025.

\bibitem[Hoffmann et~al.(2022)Hoffmann, Borgeaud, Mensch, Buchatskaya, Cai, Rutherford, de~Las~Casas, Hendricks, et~al.]{hoffmann2022training}
Jordan Hoffmann, Sebastian Borgeaud, Arthur Mensch, Elena Buchatskaya, Trevor Cai, Eliza Rutherford, Diego de~Las~Casas, Lisa~Anne Hendricks, et~al.
\newblock Training compute-optimal large language models.
\newblock In \emph{Advances in Neural Information Processing Systems}, 2022.

\bibitem[Touvron et~al.(2023)Touvron, Lavril, Izacard, Martinet, Lachaux, Lacroix, Rozi{\`e}re, Goyal, Hambro, Azhar, Rodriguez, Joulin, Grave, and Lample]{touvron2023llama}
Hugo Touvron, Thibaut Lavril, Gautier Izacard, Xavier Martinet, Marie-Anne Lachaux, Timoth{\'e}e Lacroix, Baptiste Rozi{\`e}re, Naman Goyal, Eric Hambro, Faisal Azhar, Aurelien Rodriguez, Armand Joulin, Edouard Grave, and Guillaume Lample.
\newblock {LLaMA}: Open and efficient foundation language models.
\newblock \emph{arXiv preprint arXiv:2302.13971}, 2023.

\bibitem[Gilkerson et~al.(2017)Gilkerson, Richards, Warren, Montgomery, Greenwood, Oller, Hansen, and Paul]{gilkerson2017mapping}
Jill Gilkerson, Jeffrey~A. Richards, Steven~F. Warren, Judith~K. Montgomery, Charles~R. Greenwood, D.~Kimbrough Oller, John H.~L. Hansen, and Terrance~D. Paul.
\newblock Mapping the early language environment using all-day recordings and automated analysis.
\newblock \emph{American Journal of Speech-Language Pathology}, 26\penalty0 (2):\penalty0 248--265, 2017.

\bibitem[Linzen(2020)]{linzen2020accelerate}
Tal Linzen.
\newblock How can we accelerate progress towards human-like linguistic generalization?
\newblock In \emph{Proceedings of the 58th Annual Meeting of the Association for Computational Linguistics}, 2020.

\bibitem[Frank(2023)]{frank2023bridging}
Michael~C. Frank.
\newblock Bridging the data gap between children and large language models.
\newblock \emph{Trends in Cognitive Sciences}, 27\penalty0 (11):\penalty0 990--992, 2023.

\bibitem[Schuhmann et~al.(2022)Schuhmann, Beaumont, Vencu, Gordon, Wightman, Cherti, Coombes, Katta, Mullis, Wortsman, Schramowski, Kundurthy, Crowson, Schmidt, Kaczmarczyk, and Jitsev]{schuhmann2022laion}
Christoph Schuhmann, Romain Beaumont, Richard Vencu, Cade Gordon, Ross Wightman, Mehdi Cherti, Theo Coombes, Aarush Katta, Clayton Mullis, Mitchell Wortsman, Patrick Schramowski, Srivatsa Kundurthy, Katherine Crowson, Ludwig Schmidt, Robert Kaczmarczyk, and Jenia Jitsev.
\newblock {LAION-5B}: An open large-scale dataset for training next generation image-text models.
\newblock In \emph{Advances in Neural Information Processing Systems Datasets and Benchmarks Track}, 2022.

\bibitem[Smith and Gasser(2005)]{smith2005development}
Linda~B. Smith and Michael Gasser.
\newblock The development of embodied cognition: Six lessons from babies.
\newblock \emph{Artificial Life}, 11\penalty0 (1--2):\penalty0 13--29, 2005.

\bibitem[Alayrac et~al.(2022)Alayrac, Donahue, Luc, Miech, Barr, Hasson, Lenc, Mensch, Millican, Reynolds, et~al.]{alayrac2022flamingo}
Jean-Baptiste Alayrac, Jeff Donahue, Pauline Luc, Antoine Miech, Iain Barr, Yana Hasson, Karel Lenc, Arthur Mensch, Katie Millican, Malcolm Reynolds, et~al.
\newblock {Flamingo}: a visual language model for few-shot learning.
\newblock In \emph{Advances in Neural Information Processing Systems}, 2022.

\bibitem[Rao and Ballard(1999)]{rao1999predictive}
Rajesh P.~N. Rao and Dana~H. Ballard.
\newblock Predictive coding in the visual cortex: a functional interpretation of some extra-classical receptive-field effects.
\newblock \emph{Nature Neuroscience}, 2\penalty0 (1):\penalty0 79--87, 1999.

\bibitem[Friston(2010)]{friston2010free}
Karl Friston.
\newblock The free-energy principle: a unified brain theory?
\newblock \emph{Nature Reviews Neuroscience}, 11\penalty0 (2):\penalty0 127--138, 2010.

\bibitem[LeCun et~al.(2022)]{lecun2022path}
Yann LeCun et~al.
\newblock A path towards autonomous machine intelligence version 0.9. 2, 2022-06-27.
\newblock \emph{Open Review}, 62\penalty0 (1):\penalty0 1--62, 2022.

\bibitem[Tack et~al.(2025)Tack, Lanchantin, Yu, Cohen, Kulikov, Lan, Hao, Tian, Weston, and Li]{tackLLMPretrainingContinuous2025}
Jihoon Tack, Jack Lanchantin, Jane Yu, Andrew Cohen, Ilia Kulikov, Janice Lan, Shibo Hao, Yuandong Tian, Jason Weston, and Xian Li.
\newblock {{LLM}} pretraining with continuous concepts.
\newblock \emph{arXiv preprint arXiv:2502.08524}, February 2025.
\newblock \doi{10.48550/arXiv.2502.08524}.

\bibitem[Liu et~al.(2026)Liu, Song, Wang, Ge, Lamb, Guo, Chen, Zhou, and Lin]{liuNextConceptPrediction2026}
Yuliang Liu, Yunchong Song, Yixuan Wang, Kewen Ge, Alex Lamb, Qipeng Guo, Kai Chen, Bowen Zhou, and Zhouhan Lin.
\newblock Next concept prediction in discrete latent space leads to stronger language models.
\newblock \emph{arXiv preprint arXiv:2602.08984}, February 2026.
\newblock \doi{10.48550/arXiv.2602.08984}.

\bibitem[Illing et~al.(2021)Illing, Ventura, Bellec, and Gerstner]{illingLocalPlasticityRules2021}
Bernd Illing, Jean Ventura, Guillaume Bellec, and Wulfram Gerstner.
\newblock Local plasticity rules can learn deep representations using self-supervised contrastive predictions.
\newblock In \emph{Advances in {{Neural Information Processing Systems}}}, volume~34, pages 30365--30379. Curran Associates, Inc., 2021.

\bibitem[Millidge et~al.(2022)Millidge, Seth, and Buckley]{millidge2022predictive}
Beren Millidge, Anil Seth, and Christopher~L. Buckley.
\newblock Predictive coding: a theoretical and experimental review.
\newblock \emph{arXiv preprint arXiv:2107.12979}, 2022.

\bibitem[Grill et~al.(2020)Grill, Strub, Altch{\'e}, Tallec, Richemond, Buchatskaya, Doersch, Pires, Guo, Azar, Piot, Kavukcuoglu, Munos, and Valko]{grillBootstrapYourOwn2020}
Jean-Bastien Grill, Florian Strub, Florent Altch{\'e}, Corentin Tallec, Pierre~H. Richemond, Elena Buchatskaya, Carl Doersch, Bernardo~Avila Pires, Zhaohan~Daniel Guo, Mohammad~Gheshlaghi Azar, Bilal Piot, Koray Kavukcuoglu, R{\'e}mi Munos, and Michal Valko.
\newblock Bootstrap your own latent a new approach to self-supervised learning.
\newblock In \emph{Proceedings of the 34th {{International Conference}} on {{Neural Information Processing Systems}}}, {{NIPS}} '20, pages 21271--21284, Red Hook, NY, USA, December 2020. Curran Associates Inc.
\newblock ISBN 978-1-7138-2954-6.

\bibitem[Caron et~al.(2021)Caron, Touvron, Misra, J{\'e}gou, Mairal, Bojanowski, and Joulin]{caronEmergingPropertiesSelfsupervised2021}
Mathilde Caron, Hugo Touvron, Ishan Misra, Herv{\'e} J{\'e}gou, Julien Mairal, Piotr Bojanowski, and Armand Joulin.
\newblock Emerging properties in self-supervised vision transformers.
\newblock In \emph{Proceedings of the {{IEEE}}/{{CVF International Conference}} on {{Computer Vision}}}, pages 9650--9660, 2021.

\bibitem[Baevski et~al.(2022)Baevski, Hsu, Xu, Babu, Gu, and Auli]{baevskiData2vecGeneralFramework2022}
Alexei Baevski, Wei-Ning Hsu, Qiantong Xu, Arun Babu, Jiatao Gu, and Michael Auli.
\newblock Data2vec: A general framework for self-supervised learning in speech, vision and language.
\newblock In \emph{Proceedings of the 39th {{International Conference}} on {{Machine Learning}}}, pages 1298--1312. PMLR, June 2022.

\bibitem[Assel et~al.(2025)Assel, Ibrahim, Biancalani, Regev, and Balestriero]{vanassel2025joint}
Hugues~Van Assel, Mark Ibrahim, Tommaso Biancalani, Aviv Regev, and Randall Balestriero.
\newblock Joint embedding vs reconstruction: Provable benefits of latent space prediction for self-supervised learning.
\newblock \emph{arXiv preprint arXiv:2505.12477}, 2025.

\bibitem[Li et~al.(2025)Li, Xue, Ao, Song, and Salim]{liHiTJEPAHierarchicalSelfsupervised2025}
Lihuan Li, Hao Xue, Shuang Ao, Yang Song, and Flora Salim.
\newblock {{HiT-JEPA}}: {{A Hierarchical Self-supervised Trajectory Embedding Framework}} for {{Similarity Computation}}.
\newblock \emph{arXiv preprint arXiv:2507.00028}, June 2025.
\newblock \doi{10.48550/arXiv.2507.00028}.

\bibitem[Girgis et~al.(2026)Girgis, Labriji, and Bennis]{girgisHierarchicalJEPAMeets2026}
Abanoub~M. Girgis, Ibtissam Labriji, and Mehdi Bennis.
\newblock Hierarchical {{JEPA}} meets predictive remote control in beyond {{5G}} networks.
\newblock \emph{arXiv preprint arXiv:2602.07000}, January 2026.
\newblock \doi{10.48550/arXiv.2602.07000}.

\bibitem[Booth and Thompson(1973)]{booth1973applying}
Taylor~L. Booth and Richard~A. Thompson.
\newblock Applying probability measures to abstract languages.
\newblock \emph{IEEE Transactions on Computers}, C-22\penalty0 (5):\penalty0 442--450, 1973.

\bibitem[Manning and Sch{\"u}tze(1999)]{manning1999foundations}
Christopher~D. Manning and Hinrich Sch{\"u}tze.
\newblock \emph{Foundations of Statistical Natural Language Processing}.
\newblock MIT Press, Cambridge, MA, 1999.

\bibitem[Grenander(1996)]{grenander1996elements}
Ulf Grenander.
\newblock \emph{Elements of Pattern Theory}.
\newblock Johns Hopkins University Press, 1996.

\bibitem[Mumford(1994)]{mumford1994pattern}
David Mumford.
\newblock Pattern theory: a unifying perspective.
\newblock In \emph{First European Congress of Mathematics, Vol.~I (Paris, 1992)}, volume 119 of \emph{Progress in Mathematics}, pages 187--224. Birkh{\"a}user, 1994.

\bibitem[Cagnetta et~al.(2024)Cagnetta, Petrini, Tomasini, Favero, and Wyart]{cagnetta2024deep}
Francesco Cagnetta, Leonardo Petrini, Umberto~M Tomasini, Alessandro Favero, and Matthieu Wyart.
\newblock How deep neural networks learn compositional data: The random hierarchy model.
\newblock \emph{Physical Review X}, 14\penalty0 (3):\penalty0 031001, 2024.

\bibitem[Cagnetta and Wyart(2024)]{cagnetta2024towards}
Francesco Cagnetta and Matthieu Wyart.
\newblock Towards a theory of how the structure of language is acquired by deep neural networks.
\newblock \emph{Advances in Neural Information Processing Systems}, 37:\penalty0 83119--83163, 2024.

\bibitem[Cagnetta et~al.(2026)Cagnetta, Ravent{\'o}s, Ganguli, and Wyart]{cagnettaDerivingNeuralScaling2026}
Francesco Cagnetta, Allan Ravent{\'o}s, Surya Ganguli, and Matthieu Wyart.
\newblock Deriving neural scaling laws from the statistics of natural language.
\newblock \emph{arXiv preprint arXiv:2602.07488}, 2026.

\bibitem[Sclocchi et~al.(2025{\natexlab{a}})Sclocchi, Favero, and Wyart]{sclocchiPhaseTransitionDiffusion2025}
Antonio Sclocchi, Alessandro Favero, and Matthieu Wyart.
\newblock A phase transition in diffusion models reveals the hierarchical nature of data.
\newblock \emph{Proceedings of the National Academy of Sciences}, 122\penalty0 (1):\penalty0 e2408799121, January 2025{\natexlab{a}}.
\newblock \doi{10.1073/pnas.2408799121}.

\bibitem[Sclocchi et~al.(2025{\natexlab{b}})Sclocchi, Favero, Itzhak~Levi, and Wyart]{sclocchiProbingLatentHierarchical2025}
Antonio Sclocchi, Alessandro Favero, Noam Itzhak~Levi, and Matthieu Wyart.
\newblock Probing the latent hierarchical structure of data via diffusion models.
\newblock \emph{Journal of Statistical Mechanics: Theory and Experiment}, 2025\penalty0 (8):\penalty0 84005, August 2025{\natexlab{b}}.
\newblock ISSN 1742-5468.
\newblock \doi{10.1088/1742-5468/aded6c}.

\bibitem[Favero et~al.(2025{\natexlab{a}})Favero, Sclocchi, and Wyart]{favero2025bigger}
Alessandro Favero, Antonio Sclocchi, and Matthieu Wyart.
\newblock Bigger isn't always memorizing: Early stopping overparameterized diffusion models.
\newblock \emph{arXiv preprint arXiv:2505.16959}, 2025{\natexlab{a}}.

\bibitem[Favero et~al.(2025{\natexlab{b}})Favero, Sclocchi, Cagnetta, Frossard, and Wyart]{favero2025compositional}
Alessandro Favero, Antonio Sclocchi, Francesco Cagnetta, Pascal Frossard, and Matthieu Wyart.
\newblock How compositional generalization and creativity improve as diffusion models are trained.
\newblock In \emph{International Conference on Machine Learning}, pages 16286--16306. PMLR, 2025{\natexlab{b}}.

\bibitem[Cagnetta et~al.(2025)Cagnetta, Favero, Sclocchi, and Wyart]{cagnetta2025scaling}
Francesco Cagnetta, Alessandro Favero, Antonio Sclocchi, and Matthieu Wyart.
\newblock Scaling laws and representation learning in simple hierarchical languages: Transformers versus convolutional architectures.
\newblock \emph{Physical Review E}, 112\penalty0 (6):\penalty0 065312, 2025.

\bibitem[Poggio et~al.(2017)Poggio, Mhaskar, Rosasco, Miranda, and Liao]{poggio2017why}
Tomaso Poggio, Hrushikesh Mhaskar, Lorenzo Rosasco, Brando Miranda, and Qianli Liao.
\newblock Why and when can deep-but-not-shallow networks avoid the curse of dimensionality: A review.
\newblock \emph{International Journal of Automation and Computing}, 14\penalty0 (5):\penalty0 503--519, 2017.

\bibitem[Schmidt-Hieber(2020)]{schmidt2020nonparametric}
Johannes Schmidt-Hieber.
\newblock Nonparametric regression using deep neural networks with {ReLU} activation function.
\newblock \emph{Annals of Statistics}, 48\penalty0 (4):\penalty0 1875--1897, 2020.

\bibitem[Mei(2025)]{mei2024unets}
Song Mei.
\newblock {U-Nets} as belief propagation: Efficient classification, denoising, and diffusion in generative hierarchical models.
\newblock In \emph{International Conference on Learning Representations (ICLR)}, 2025.
\newblock arXiv:2404.18444.

\bibitem[Allen-Zhu and Li(2025)]{allenzhu2024cfg}
Zeyuan Allen-Zhu and Yuanzhi Li.
\newblock Physics of language models: Part 1, learning hierarchical language structures.
\newblock \emph{Transactions on Machine Learning Research}, 2025.
\newblock URL \url{https://openreview.net/forum?id=mPQKyzkA1K}.

\bibitem[Zhao et~al.(2023)Zhao, Panigrahi, Ge, and Arora]{zhao2023transformers}
Haoyu Zhao, Abhishek Panigrahi, Rong Ge, and Sanjeev Arora.
\newblock Do transformers parse while predicting the masked word?
\newblock In \emph{Proceedings of the 2023 Conference on Empirical Methods in Natural Language Processing (EMNLP)}, pages 16513--16542, Singapore, 2023. Association for Computational Linguistics.
\newblock \doi{10.18653/v1/2023.emnlp-main.1029}.
\newblock URL \url{https://aclanthology.org/2023.emnlp-main.1029/}.

\bibitem[Garnier-Brun et~al.(2025)Garnier-Brun, M{\'e}zard, Moscato, and Saglietti]{garnierbrun2024transformers}
Jerome Garnier-Brun, Marc M{\'e}zard, Emanuele Moscato, and Luca Saglietti.
\newblock How transformers learn structured data: Insights from hierarchical filtering.
\newblock In \emph{International Conference on Machine Learning (ICML)}, 2025.
\newblock arXiv:2408.15138.

\bibitem[Mossel(2016)]{mossel2016deep}
Elchanan Mossel.
\newblock Deep learning and hierarchical generative models.
\newblock \emph{arXiv preprint arXiv:1612.09057}, 2016.

\bibitem[Malach and Shalev-Shwartz(2018)]{malach2018provably}
Eran Malach and Shai Shalev-Shwartz.
\newblock A provably correct algorithm for deep learning that actually works.
\newblock \emph{arXiv preprint arXiv:1803.09522}, 2018.

\bibitem[DeGiuli(2019)]{degiuli2019random}
Eric DeGiuli.
\newblock Random language model.
\newblock \emph{Physical Review Letters}, 122\penalty0 (12):\penalty0 128301, 2019.
\newblock \doi{10.1103/PhysRevLett.122.128301}.

\bibitem[Balestriero et~al.(2023)Balestriero, Ibrahim, Sobal, Morcos, Shekhar, Goldstein, Bordes, Bardes, Mialon, Tian, Schwarzschild, Wilson, Geiping, Garrido, Fernandez, Bar, Pirsiavash, LeCun, and Goldblum]{balestriero2023cookbook}
Randall Balestriero, Mark Ibrahim, Vlad Sobal, Ari Morcos, Shashank Shekhar, Tom Goldstein, Florian Bordes, Adrien Bardes, Gregoire Mialon, Yuandong Tian, Avi Schwarzschild, Andrew~Gordon Wilson, Jonas Geiping, Quentin Garrido, Pierre Fernandez, Amir Bar, Hamed Pirsiavash, Yann LeCun, and Micah Goldblum.
\newblock A cookbook of self-supervised learning.
\newblock \emph{arXiv preprint arXiv:2304.12210}, 2023.

\bibitem[van~den Oord et~al.(2019)van~den Oord, Li, and Vinyals]{oordRepresentationLearningContrastive2019a}
Aaron van~den Oord, Yazhe Li, and Oriol Vinyals.
\newblock Representation {{Learning}} with {{Contrastive Predictive Coding}}.
\newblock \emph{arXiv preprint arXiv:1807.03748}, January 2019.
\newblock \doi{10.48550/arXiv.1807.03748}.

\bibitem[Lowe et~al.(2026)Lowe, Fuller, Oore, Shelhamer, and Taylor]{loweSelfdistillationHiddenLayers2026a}
Scott~C. Lowe, Anthony Fuller, Sageev Oore, Evan Shelhamer, and Graham~W. Taylor.
\newblock Self-distillation of hidden layers for self-supervised representation learning.
\newblock \emph{arXiv preprint arXiv:2603.15553}, March 2026.
\newblock \doi{10.48550/arXiv.2603.15553}.

\bibitem[Maes et~al.(2026)Maes, Lidec, Scieur, LeCun, and Balestriero]{maesLeWorldModelStableEndtoend2026}
Lucas Maes, Quentin~Le Lidec, Damien Scieur, Yann LeCun, and Randall Balestriero.
\newblock {{LeWorldModel}}: Stable end-to-end joint-embedding predictive architecture from pixels.
\newblock \emph{arXiv preprint arXiv:2603.19312}, March 2026.
\newblock \doi{10.48550/arXiv.2603.19312}.

\bibitem[Ren et~al.(2026)Ren, Dandi, Krzakala, and Lee]{ren2026provable}
Yunwei Ren, Yatin Dandi, Florent Krzakala, and Jason~D. Lee.
\newblock Provable learning of random hierarchy models and hierarchical shallow-to-deep chaining.
\newblock \emph{arXiv preprint arXiv:2601.19756}, 2026.

\bibitem[Parley et~al.(2026)Parley, Cagnetta, and Wyart]{parley2026deep}
Jack~T Parley, Francesco Cagnetta, and Matthieu Wyart.
\newblock Deep networks learn to parse uniform-depth context-free languages from local statistics.
\newblock \emph{arXiv preprint arXiv:2602.06065}, 2026.

\bibitem[Defilippis et~al.(2026)Defilippis, Krzakala, Loureiro, and Maillard]{defilippis2026optimal}
Lorenzo Defilippis, Florent Krzakala, Bruno Loureiro, and Antoine Maillard.
\newblock Optimal scaling laws in learning hierarchical multi-index models.
\newblock \emph{arXiv preprint arXiv:2602.05846}, 2026.

\bibitem[Caron et~al.(2018)Caron, Bojanowski, Joulin, and Douze]{caronDeepClusteringUnsupervised2018}
Mathilde Caron, Piotr Bojanowski, Armand Joulin, and Matthijs Douze.
\newblock Deep clustering for unsupervised learning of visual features.
\newblock In \emph{Proceedings of the {{European Conference}} on {{Computer Vision}} ({{ECCV}})}, pages 132--149, 2018.

\bibitem[Caron et~al.(2020)Caron, Misra, Mairal, Goyal, Bojanowski, and Joulin]{caronUnsupervisedLearningVisual2020}
Mathilde Caron, Ishan Misra, Julien Mairal, Priya Goyal, Piotr Bojanowski, and Armand Joulin.
\newblock Unsupervised learning of visual features by contrasting cluster assignments.
\newblock In \emph{Proceedings of the 34th {{International Conference}} on {{Neural Information Processing Systems}}}, {{NIPS}} '20, pages 9912--9924, Red Hook, NY, USA, December 2020. Curran Associates Inc.
\newblock ISBN 978-1-7138-2954-6.

\bibitem[Oquab et~al.(2024)Oquab, Darcet, Moutakanni, Vo, Szafraniec, Khalidov, Fernandez, Haziza, Massa, {El-Nouby}, Assran, Ballas, Galuba, Howes, Huang, Li, Misra, Rabbat, Sharma, Synnaeve, Xu, Jegou, Mairal, Labatut, Joulin, and Bojanowski]{oquabDINOv2LearningRobust2024}
Maxime Oquab, Timoth{\'e}e Darcet, Th{\'e}o Moutakanni, Huy~V. Vo, Marc Szafraniec, Vasil Khalidov, Pierre Fernandez, Daniel Haziza, Francisco Massa, Alaaeldin {El-Nouby}, Mido Assran, Nicolas Ballas, Wojciech Galuba, Russell Howes, Po-Yao Huang, Shang-Wen Li, Ishan Misra, Michael Rabbat, Vasu Sharma, Gabriel Synnaeve, Hu~Xu, Herve Jegou, Julien Mairal, Patrick Labatut, Armand Joulin, and Piotr Bojanowski.
\newblock {{DINOv2}}: Learning robust visual features without supervision.
\newblock \emph{Transactions on Machine Learning Research}, January 2024.
\newblock ISSN 2835-8856.

\bibitem[Sim{\'e}oni et~al.(2026)Sim{\'e}oni, Vo, Seitzer, Baldassarre, Oquab, Jose, Khalidov, Szafraniec, Yi, Ramamonjisoa, Massa, Haziza, Wehrstedt, Wang, Darcet, Moutakanni, Sentana, Roberts, Vedaldi, Tolan, Brandt, Couprie, Mairal, Jegou, Labatut, and Bojanowski]{simeoniDINOv32026}
Oriane Sim{\'e}oni, Huy~V. Vo, Maximilian Seitzer, Federico Baldassarre, Maxime Oquab, Cijo Jose, Vasil Khalidov, Marc Szafraniec, Seung~Eun Yi, Michael Ramamonjisoa, Francisco Massa, Daniel Haziza, Luca Wehrstedt, Jianyuan Wang, Timoth{\'e}e Darcet, Th{\'e}o Moutakanni, Leonel Sentana, Claire Roberts, Andrea Vedaldi, Jamie Tolan, John Brandt, Camille Couprie, Julien Mairal, Herve Jegou, Patrick Labatut, and Piotr Bojanowski.
\newblock {{DINOv3}}.
\newblock \emph{Transactions on Machine Learning Research}, May 2026.
\newblock ISSN 2835-8856.

\bibitem[Zhou et~al.(2022)Zhou, Wei, Wang, Shen, Xie, Yuille, and Kong]{zhouIBOTImageBERT2022}
Jinghao Zhou, Chen Wei, Huiyu Wang, Wei Shen, Cihang Xie, Alan Yuille, and Tao Kong.
\newblock {{iBOT}}: Image {{BERT}} pre-training with online tokenizer.
\newblock In \emph{International {{Conference}} on {{Learning Representations}}}, January 2022.

\bibitem[Darcet et~al.(2025)Darcet, Baldassarre, Oquab, Mairal, and Bojanowski]{darcetClusterPredictLatents2025}
Timoth{\'e}e Darcet, Federico Baldassarre, Maxime Oquab, Julien Mairal, and Piotr Bojanowski.
\newblock Cluster and predict latent patches for improved masked image modeling.
\newblock \emph{Transactions on Machine Learning Research}, February 2025.
\newblock ISSN 2835-8856.

\bibitem[{Mur-Labadia} et~al.(2026){Mur-Labadia}, Muckley, Bar, Assran, Sinha, Rabbat, LeCun, Ballas, and Bardes]{mur-labadiaVJEPA21Unlocking2026}
Lorenzo {Mur-Labadia}, Matthew Muckley, Amir Bar, Mido Assran, Koustuv Sinha, Mike Rabbat, Yann LeCun, Nicolas Ballas, and Adrien Bardes.
\newblock V-{{JEPA}} 2.1: Unlocking dense features in video self-supervised learning.
\newblock \emph{arXiv preprint arXiv:2603.14482}, March 2026.
\newblock \doi{10.48550/arXiv.2603.14482}.

\bibitem[Delrocq et~al.(2026)Delrocq, Zihan, Bellec, and Gerstner]{delrocqSelfsupervisedLocalLearning2026}
Ariane Delrocq, Wu~S. Zihan, Guillaume Bellec, and Wulfram Gerstner.
\newblock Self-supervised local learning rules learn the hidden hierarchical structure of high-dimensional data.
\newblock \emph{arXiv preprint arXiv:2605.18557}, May 2026.
\newblock \doi{10.48550/arXiv.2605.18557}.

\bibitem[Quinton and Rey(2025)]{quintonJacobianDescentMultiobjective2025}
Pierre Quinton and Val{\'e}rian Rey.
\newblock Jacobian descent for multi-objective optimization.
\newblock \emph{arXiv preprint arXiv:2406.16232}, February 2025.
\newblock \doi{10.48550/arXiv.2406.16232}.

\bibitem[Akiba et~al.(2019)Akiba, Sano, Yanase, Ohta, and Koyama]{akibaOptunaNextgenerationHyperparameter2019}
Takuya Akiba, Shotaro Sano, Toshihiko Yanase, Takeru Ohta, and Masanori Koyama.
\newblock Optuna: A next-generation hyperparameter optimization framework.
\newblock In \emph{Proceedings of the 25th {{ACM SIGKDD International Conference}} on {{Knowledge Discovery}} \& {{Data Mining}}}, {{KDD}} '19, pages 2623--2631, New York, NY, USA, July 2019. Association for Computing Machinery.
\newblock ISBN 978-1-4503-6201-6.
\newblock \doi{10.1145/3292500.3330701}.

\end{thebibliography}

\newpage 
\appendix

\section{Further related work\label{sec:app:further_related_works}}

\paragraph{Approximation- and information-theoretic bounds.}
Deep networks can represent hierarchically compositional functions with exponentially fewer parameters than shallow ones~\citep{poggio2017why}, which translates into nonparametric sample-complexity bounds polynomial in the input dimension for both supervised~\citep{schmidt2020nonparametric} and self-supervised~\citep{mei2024unets} tasks. These results characterize expressivity and information-theoretic statistics, but do not describe what gradient descent can actually learn from a finite training set.

\paragraph{Supervised learning.}
\citet{mossel2016deep} introduced phylogeny-inspired generative models on regular trees, and \citet{malach2018provably} showed that they can be learned by an iterative clustering algorithm. \citet{degiuli2019random} introduced languages with random production rules. The RHM~\citep{cagnetta2024deep} combines both assumptions, leading to models with controllable correlations where sample complexity can be predicted and compared to observations in deep nets. \citet{ren2026provable} proved rigorously that result for multi-step gradient descent. \citet{parley2026deep} obtains sample complexity for non-regular trees. \citet{defilippis2026optimal} derive optimal scaling laws for hierarchical multi-index models, a related class of generative models with continuous latents.

\paragraph{Self-supervised learning.}
A complementary line of work has shown that transformers can approximately implement parsing-style (inside-algorithm) inference on context-free grammars~\citep{allenzhu2024cfg, zhao2023transformers, garnierbrun2024transformers}. These results interpret the algorithm that trained LLMs implement, but do not characterize the learning mechanism nor its sample complexity. Such analyses do exist for the RHM, but only for next-token prediction~\citep{cagnetta2024deep} and for diffusion~\citep{favero2025compositional,favero2025bigger}, not for learning from one's own latents as considered here.

\paragraph{Clustering-based approaches to self-distillation.}
Our SLC method is similar in spirit to clustering or self-learned pseudo-labeling-based methods for image-representation learning, such as DeepCluster \cite{caronDeepClusteringUnsupervised2018} which in turn evolved into SwAV \cite{caronUnsupervisedLearningVisual2020}, the DINO family of models \cite{caronEmergingPropertiesSelfsupervised2021, oquabDINOv2LearningRobust2024, simeoniDINOv32026}, iBOT \cite{zhouIBOTImageBERT2022}, and CAPI \cite{darcetClusterPredictLatents2025}. Our method adds \textit{explicit} hierarchical clustering -- it would be interesting to test to what degree \textit{implicit} hierarchical supervision is present for these clustering-based approaches. 

\paragraph{Explicitly hierarchical self-distillation SSL.} 
Schematic realizations of the H-JEPA \cite{lecun2022path}, V-JEPA 2.1 \cite{mur-labadiaVJEPA21Unlocking2026}, Bootleg \cite{loweSelfdistillationHiddenLayers2026a}, and our SLC architectures are depicted in \cref{fig:hjepa_bootleg_schematic}. Bootleg and V-JEPA 2.1 are extremely recent, report state of the art performance, and employ multi-loss optimization with one prediction per sampled representation level. Bootleg is schematically very similar to data2vec (cf. \cref{fig:d2vec_online_class}--\textbf{left}), except that in bootleg gradients are computed after averaging losses over representations, whereas in data2vec the averaging is done first over representations. V-JEPA 2.1 meanwhile collates representations from throughout the student network, similar to H-JEPA, but also allows low-level target predictions to be influenced by higher level latents. This high$\rightarrow$low prediction path is missing in H-JEPA, but present in SLC, V-JEPA 2.1, and Bootleg. It is important to clarify whether this is related to the fact that, with few exceptions~\cite{liHiTJEPAHierarchicalSelfsupervised2025,girgisHierarchicalJEPAMeets2026}, ``standard'' H-JEPA appears to have found limited application in the literature. 

\paragraph{Predictive coding and the RHM.}
\citet{delrocqSelfsupervisedLocalLearning2026} recently showed that a backprop-free predictive coding model, CLAPP, can learn the RHM's hierarchical structure with sample complexity comparable to that of backprop. In that work, the model was trained on the RHM in the setting where all production rules exist, such that all sentences belong to the language (corresponding to $f=1$ in our notations). In that case, self-supervised methods cannot learn the RHM rules, as all inputs are equiprobable and correlations within the input are absent. However, the variation of CLAPP considered in that work uses a contrastive update rule that is gated by an extra learning signal: updates are given a $\pm 1$ sign indicating whether pairs  belong to the same top-level class. The sample complexity obtained was that of supervised learning $\sim m^L$. Baseline CLAPP  \cite{illingLocalPlasticityRules2021} does not rely on this non-local context supervision -- it would be interesting to understand whether it learns the RHM rules in sample complexity $m^3$ when $f<1$. 

\clearpage

\begin{figure}[p]
    \centering
    \vspace*{\fill}
    \resizebox{\linewidth}{!}{\begingroup
\providecommand{\sslHjepaPanelX}{}
\providecommand{\sslHjepaPanelY}{}
\providecommand{\sslVjepaPanelX}{}
\providecommand{\sslVjepaPanelY}{}
\providecommand{\sslBootlegPanelX}{}
\providecommand{\sslBootlegPanelY}{}
\providecommand{\sslNeuralClusteringPanelX}{}
\providecommand{\sslNeuralClusteringPanelY}{}
\providecommand{\sslHjepaPanelWidth}{}
\providecommand{\sslVjepaPanelWidth}{}
\providecommand{\sslBootlegPanelWidth}{}
\providecommand{\sslNeuralClusteringPanelWidth}{}
\providecommand{\schematicSslBoundingBoxPath}{}
\renewcommand{\sslHjepaPanelX}{-3.65}
\renewcommand{\sslHjepaPanelY}{2.90}
\renewcommand{\sslVjepaPanelX}{3.65}
\renewcommand{\sslVjepaPanelY}{2.90}
\renewcommand{\sslBootlegPanelX}{-3.65}
\renewcommand{\sslBootlegPanelY}{-2.90}
\renewcommand{\sslNeuralClusteringPanelX}{3.65}
\renewcommand{\sslNeuralClusteringPanelY}{-2.90}
\renewcommand{\sslHjepaPanelWidth}{5.20cm}
\renewcommand{\sslVjepaPanelWidth}{5.85cm}
\renewcommand{\sslBootlegPanelWidth}{4.90cm}
\renewcommand{\sslNeuralClusteringPanelWidth}{5.85cm}
\renewcommand{\schematicSslBoundingBoxPath}{\path[use as bounding box] (-4.10,-5.50) rectangle (6.50,5.50);}

\begin{tikzpicture}
    \schematicSslBoundingBoxPath
    \node[inner sep=0pt] at (\sslHjepaPanelX,\sslHjepaPanelY) {%
        \resizebox{\sslHjepaPanelWidth}{!}{%
            \IfFileExists{figs/schematic_ssl_hjepa.tikz}{\input{figs/schematic_ssl_hjepa.tikz}}{\input{schematic_ssl_hjepa.tikz}}%
        }%
    };
    \node[inner sep=0pt] at (\sslVjepaPanelX,\sslVjepaPanelY) {%
        \resizebox{\sslVjepaPanelWidth}{!}{%
            \IfFileExists{figs/schematic_ssl_vjepa21.tikz}{\input{figs/schematic_ssl_vjepa21.tikz}}{\input{schematic_ssl_vjepa21.tikz}}%
        }%
    };
    \node[inner sep=0pt] at (\sslBootlegPanelX,\sslBootlegPanelY) {%
        \resizebox{\sslBootlegPanelWidth}{!}{%
            \IfFileExists{figs/schematic_ssl_bootleg.tikz}{\input{figs/schematic_ssl_bootleg.tikz}}{\input{schematic_ssl_bootleg.tikz}}%
        }%
    };
    \node[inner sep=0pt] at (\sslNeuralClusteringPanelX,\sslNeuralClusteringPanelY) {%
        \resizebox{\sslNeuralClusteringPanelWidth}{!}{%
            \IfFileExists{figs/schematic_ssl_neural_clustering.tikz}{\input{figs/schematic_ssl_neural_clustering.tikz}}{\input{schematic_ssl_neural_clustering.tikz}}%
        }%
    };
\end{tikzpicture}
\endgroup}
    \caption{\textbf{Top left:} Structure of losses and targets for na\"ive H-JEPAs. \textbf{Top right:} Structure of losses and targets in V-JEPA 2.1. \textbf{Bottom left:} Structure of losses and targets for Bootleg. \textbf{Bottom right:} A schematic implementation of Stacked Latent Clustering, in which high-level representations are grounded in predicted lower-level representations.}
    \label{fig:hjepa_bootleg_schematic}
    \vspace*{\fill}
\end{figure}

\clearpage

\section{Proof of Iterative Latent Clustering recovery}
\label{app:clustering-proof}

This appendix proves the recovery guarantee for Iterative Latent Clustering. Throughout the proof, probabilities are taken over fresh samples from a fixed grammar instance, unless an expectation over grammars is explicitly denoted by \(\mathbb E_{\mathcal G}\).

\subsection{Formal assumptions}

We use the following two assumptions. The first is a typical-event assumption on the random grammar. The second is an algorithmic stability assumption on the clustering module.

\begin{assumption}[Balanced and separated grammar]
\label{ass:app-balanced-separated}
There exist constants \(c_{\rm bal},C_{\rm bal},c_{\rm sep}>0\), independent of \(v,m,L\), such that for every level \(\ell=0,\ldots,L-2\), every grammatical tuple \(\nu\in\mathcal S_\ell\) satisfies
\[
    \frac{c_{\rm bal}}{vm}\leq \mathbb P[T=\nu]\leq \frac{C_{\rm bal}}{vm},
\]
and the parent context vectors satisfy
\[
    \min_{a\neq b}\|\Phi_{\ell,a}-\Phi_{\ell,b}\|_2\geq c_{\rm sep}\frac{\sqrt{1-f}}{m}.
\]
\end{assumption}

\begin{assumption}[Stable clustering module]
\label{ass:app-stable-clustering}
Let \(\{z_\nu:\nu\in\mathcal S_\ell\}\subset\mathbb R^v\) be points with true centers \(\{\Phi_{\ell,a}:a\in\mathcal V_{\ell+1}\}\). Suppose that
\[
    \|z_\nu-\Phi_{\ell,\mathrm{par}_\ell(\nu)}\|_2\leq \varepsilon
\]
for all \(\nu\in\mathcal S_\ell\), and that
\[
    \|\Phi_{\ell,a}-\Phi_{\ell,b}\|_2\geq \Delta
\]
for all \(a\neq b\). If \(\varepsilon\leq \Delta/8\), then \(\mathsf{Cluster}_v(\{z_\nu\}_{\nu\in\mathcal S_\ell})\) returns the true partition \(\mathcal S_\ell=\bigsqcup_{a\in\mathcal V_{\ell+1}}\mathcal S_{\ell,a}\), up to a permutation of labels.
\end{assumption}

Assumption~\ref{ass:app-balanced-separated} is a high-probability random-grammar event. Balancedness follows from asymptotic uniformity of single-symbol marginals in the limit \(v\to\infty\) at fixed \(f\in(0,1)\), see Lemma C.1 of \cite{parley2026deep}. Indeed, if \(\nu\in\mathcal S_{\ell,a}\), then \(\nu\) is generated with probability \(1/m\) conditional on its parent \(a\). Therefore
\[
    \mathbb P[T^{(\ell)}=\nu]
    =
    \frac1m\mathbb P[h^{(\ell+1)}=a]
    \sim
    \frac1{vm}.
\]
The separation scale is the latent-latent correlation scale. Indeed, writing
\[
    \psi_\ell(\nu):=\phi_\ell(\nu)-\mathbb E[e_Y],
\]
the connected covariance column is
\[
    C_\ell(:,\nu)=\mathbb P[T=\nu]\psi_\ell(\nu).
\]
For \(\ell=0\), \(T\) is a grammatical visible \(s\)-tuple and \(Y\) is a visible token in a sibling branch, with lowest common ancestor two levels above the leaves. This is precisely the token--tuple covariance computed in \citet{cagnetta2024towards}. In our notation, their calculation gives, for a fixed visible value \(y\) and a grammatical tuple \(\nu\),
\[
    \mathbb E_{\mathcal G}\!\left[C_0(y,\nu)^2\,\middle|\,\nu\in\mathcal S_0\right]\asymp \frac{1-f}{v^3m^4}.
\]
Summing over the \(v\) possible values of \(Y\) yields the squared column norm
\[
    \mathbb E_{\mathcal G}\!\left[\|C_0(:,\nu)\|_2^2\,\middle|\,\nu\in\mathcal S_0\right]\asymp v\cdot \frac{1-f}{v^3m^4}=\frac{1-f}{v^2m^4}.
\]

The same estimate holds at every latent level by self-similarity of the RHM. Indeed, fix \(\ell>0\) and consider the sequence of level-\(\ell\) variables
\[
    H^{(\ell)}:=\big(h^{(\ell)}_1,\ldots,h^{(\ell)}_{s^{L-\ell}}\big).
\]
Marginally, \(H^{(\ell)}\) is generated by an RHM of depth \(L-\ell\) with the same local parameters \((s,v,m)\), using the production rules at levels \(\ell,\ell+1,\ldots,L-1\). In this truncated RHM, the level-\(\ell\) variables play the role of visible tokens. Our statistic \(C_\ell(:,\nu)\) is therefore exactly the level-\(0\) token--tuple covariance of this truncated model: \(T\) is a terminal \(s\)-tuple, \(Y\) is a terminal child in a sibling branch, and their lowest common ancestor is again two levels above the terminal scale. Since the random rule ensemble has the same law at every level, the same covariance calculation gives
\[
    \mathbb E_{\mathcal G}\!\left[\|C_\ell(:,\nu)\|_2^2\,\middle|\,\nu\in\mathcal S_\ell\right]\asymp \frac{1-f}{v^2m^4},
\]
uniformly for \(\ell=0,\ldots,L-2\).
Since
\[
    C_\ell(:,\nu)=\mathbb P[T=\nu]\psi_\ell(\nu)
\]
and, on the balancedness event, \(\mathbb P[T=\nu]\asymp 1/(vm)\), this implies
\[
    \mathbb E_{\mathcal G}\!\left[\|\psi_\ell(\nu)\|_2^2\,\middle|\,\nu\in\mathcal S_\ell\right]\asymp \frac{(1-f)/(v^2m^4)}{1/(v^2m^2)}=\frac{1-f}{m^2}.
\]

Thus \(\sqrt{1-f}/m\) is the natural scale of the centered context-vector signal. 

To turn this into a separation statement, we need to assume that different parent context vectors have generic relative positions, as expected for independent random production rules. In particular, under a random-direction heuristic, overlaps are smaller than squared norms by a factor \(v^{-1/2}\), so pairwise distances have the same scale as the individual norms.

\subsection{Theorem and proof}

\begin{theorem}[Recovery of the non-root hierarchy]
\label{thm:iterative-latent-clustering-formal}
Assume the balancedness and separation conditions above, and assume the clustering is stable under perturbations as described. Then there exists a constant \(C>0\), depending only on the constants in those assumptions and on \(s\), such that the following holds. If
\begin{equation}
    P
    \geq
    C
    \left[
        vm\log\frac{Lvm}{\delta}
        +
        \frac{v m^3}{1-f}
        \log\frac{Lvm}{\delta}
    \right],
\end{equation}
then the iterative latent-clustering algorithm recovers
\(
    h^{(1)},h^{(2)},\ldots,h^{(L-1)}
\)
and all production-rule classes below the root with probability at least \(1-\delta\), up to independent permutations of the latent labels at each level.

In particular, up to logarithmic factors,
\(
    P_{\mathrm{ILC}}
    \asymp
    (1-f)^{-1} \, v m^3.
\)
\end{theorem}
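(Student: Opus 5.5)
We argue by induction over the levels \(\ell=0,1,\ldots,L-2\), splitting the failure probability evenly as \(\delta/L\) per level. The inductive hypothesis at level \(\ell\) is that \(\widehat h^{(\ell)}\) equals \(h^{(\ell)}\) up to a fixed relabeling \(\pi_\ell\) of \(\mathcal V_\ell\), simultaneously on all \(P\) samples. At \(\ell=0\) this holds trivially. Under the hypothesis, every empirical tuple \(\widehat T^{(p)}_\ell\) is the true tuple relabeled by \(\pi_\ell\), and every target \(\widehat Z^{(p)}_\ell\) is the true cousin symbol relabeled by \(\pi_\ell\). The statistics computed at step \(\ell\) are therefore exactly those of the truncated RHM on \(H^{(\ell)}\), up to a permutation of coordinates. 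A permutation preserves Euclidean distances, so the separation of Assumption~\ref{ass:app-balanced-separated} and the stability of Assumption~\ref{ass:app-stable-clustering} apply unchanged. This reduces everything to a single-level statement: with probability at least \(1-\delta/L\), step \(\ell\) returns the true partition \(\{\mathcal S_{\ell,a}\}\). The labels it produces then define \(\widehat h^{(\ell+1)}=\pi_{\ell+1}\circ h^{(\ell+1)}\) on every sample, since the parent of a grammatical tuple is unique.

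The single-level statement requires two events.

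\textbf{(i) Support recovery.} Every \(\nu\in\mathcal S_\ell\) must occur at least \(N_0\) times. By balancedness \(\mathbb P[T=\nu]\ge c_{\rm bal}/(vm)\), and only grammatical tuples are ever observed, so \(\widehat{\mathcal S}_\ell=\mathcal S_\ell\) exactly once all \(vm\) tuples are seen. A multiplicative Chernoff bound with a union bound over the \(vm\) tuples gives \(N(\nu)\ge \tfrac{c_{\rm bal}}{2}P/(vm)\) for all \(\nu\), with failure probability at most \(\delta/(2L)\). This holds as soon as \(P\ge C\,vm\log(Lvm/\delta)\), which accounts for the first term of the bound.

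\textbf{(ii) Concentration of context vectors.} Condition on the counts. Given \(T^{(p)}=\nu\), the targets \(Z^{(p)}\) are i.i.d. draws from \(\phi_\ell(\nu)\), because samples are independent and we use one fixed tuple and one fixed cousin position per sample. Hence \(\widehat\phi_\ell(\nu)\) is an average of \(N(\nu)\) i.i.d. one-hot vectors. A vector Bernstein or Hoeffding bound in \(\ell_2\) (each summand has norm at most \(\sqrt2\) after centering) gives
\[
\|\widehat\phi_\ell(\nu)-\phi_\ell(\nu)\|_2\le C'\sqrt{\log(Lvm/\delta)/N(\nu)},
\]
with probability \(1-\delta/(2Lvm)\). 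Union bounding over \(\nu\) and inserting \(N(\nu)\gtrsim P/(vm)\), we need the right-hand side to be at most \(\Delta/8\), where \(\Delta=c_{\rm sep}\sqrt{1-f}/m\). Squaring gives \(P\gtrsim vm\cdot m^2\log(Lvm/\delta)/(1-f)\), which is the second term of the bound. Assumption~\ref{ass:app-stable-clustering}, with \(z_\nu=\widehat\phi_\ell(\nu)\) and centers \(\Phi_{\ell,a}\), then yields the true partition. A union bound over the \(L-1\) levels completes the induction. Recovery of the production-rule classes follows because \(\mathcal S_{\ell,a}=\mathcal R_{\ell,\pi^{-1}(a)}\).

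The main obstacle is the crude \(\ell_2\) concentration step. A bound with a bare \(\sqrt{1/N}\) rate is dimension-free here, since one-hot vectors have bounded norm. It nonetheless needs care, because the Hoeffding constant must not pick up a factor of \(v\); we would use the bounded-differences inequality on the norm to avoid this. If that yields only \(\sqrt{v/N}\), we would instead exploit \(\|\phi\|_2^2\le1\) via the variance form \(\mathbb E\|\widehat\phi-\phi\|^2\le 1/N\) plus McDiarmid. A second subtlety is the independence structure: conditioning on the realized counts must not bias the targets. This holds because the cousin target depends on the tuple only through the grandparent chain, and samples are i.i.d.
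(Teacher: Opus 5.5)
Your proposal is correct and follows essentially the same route as the paper: a Chernoff-plus-union bound for the counts, a McDiarmid-plus-variance ($\mathbb E\|\widehat\phi-\phi\|_2^2\le 1/N$) concentration bound for the context vectors, the $\Delta/8$ threshold with $\Delta=c_{\rm sep}\sqrt{1-f}/m$, and a permutation-invariant induction over levels. The only difference is bookkeeping. The paper first defines a single good event on the oracle statistics computed from the true latents at all levels, then runs the induction deterministically on that event. Your per-level union bound amounts to the same thing, provided each level's event is read as an event on the true latents rather than as a probability conditioned on earlier levels' success.
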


\paragraph{Synonyms have identical context vectors.}

We first record the property that makes clustering possible.

\begin{lemma}[Synonyms have identical context vectors]
\label{lem:app-synonyms}
If \(\nu,\nu'\in\mathcal S_\ell\) satisfy \(\mathrm{par}_\ell(\nu)=\mathrm{par}_\ell(\nu')\), then \(\phi_\ell(\nu)=\phi_\ell(\nu')\).
\end{lemma}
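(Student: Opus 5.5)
The plan is to write \(\phi_\ell(\nu)\) as a ratio of joint probabilities and factor both through the parent of the tuple, using the Markov structure of the generative tree. Let \(T=T^{(\ell)}_u\) be the fixed level-\(\ell\) tuple and let \(a=h^{(\ell+1)}_u\) be its parent. Let \(Z\) be the fixed level-\(\ell\) cousin element. It lies in a different level-\(\ell\) tuple, so it is not a descendant of \(a\). For any \(\nu\in\mathcal S_\ell\) with \(\mathrm{par}_\ell(\nu)=a\), the first step is to compute
\[
    \mathbb P[T=\nu,Z=z]=\mathbb P[h^{(\ell+1)}_u=a]\;\mathbb P[T=\nu\mid h^{(\ell+1)}_u=a]\;\mathbb P[Z=z\mid h^{(\ell+1)}_u=a,\,T=\nu].
\]
This holds because \(T=\nu\) forces \(h^{(\ell+1)}_u=a\), by unambiguity of the grammar.

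The key step is conditional independence. Generation is top-down: given \(h^{(\ell+1)}_u=a\), the tuple \(T\) is drawn uniformly from \(\mathcal R_{\ell,a}\) using randomness independent of everything else. That randomness is the level-\(\ell\) rule choice at node \(u\) together with the subtrees below it. \(Z\) is determined by the root draw and by rule choices at nodes that are not descendants of \(u\), so \(Z\) is conditionally independent of \(T\) given \(h^{(\ell+1)}_u\). Hence
\[
    \mathbb P[Z=z\mid h^{(\ell+1)}_u=a,T=\nu]=\mathbb P[Z=z\mid h^{(\ell+1)}_u=a],
\]
and moreover \(\mathbb P[T=\nu\mid h^{(\ell+1)}_u=a]=1/m\).

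Dividing the joint by \(\mathbb P[T=\nu]=\mathbb P[h^{(\ell+1)}_u=a]/m\), which is positive since \(\nu\) is grammatical and every symbol has positive probability, gives
\[
    \phi_\ell(\nu)_z=\mathbb P[Z=z\mid h^{(\ell+1)}_u=a].
\]
The right-hand side depends on \(\nu\) only through \(a\), so \(\phi_\ell(\nu)=\phi_\ell(\nu')\) for any synonym \(\nu'\).

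The only point needing care is the conditional independence claim, that \(Z\) is a non-descendant of \(u\). I would state this explicitly from the definition of a cousin: \(Z\) shares the level-\((\ell+2)\) grandparent with \(T\) but lies in a different level-\(\ell\) tuple, hence has a different level-\((\ell+1)\) parent. I would then invoke the standard tree-Markov property of the top-down sampling. Nothing else is an obstacle.
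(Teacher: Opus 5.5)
Your proof is correct and uses the same argument as the paper: unambiguity forces $h^{(\ell+1)}_u=\mathrm{par}_\ell(\nu)$, and the top-down Markov structure makes the cousin target independent of the rule drawn at $u$ given that parent, so $\phi_\ell(\nu)_z=\mathbb P[Z=z\mid h^{(\ell+1)}_u=a]$; your explicit factorization of the joint and division by $\mathbb P[T=\nu]=\mathbb P[h^{(\ell+1)}_u=a]/m$ just spells out the paper's one-line step. One small quibble: take the positivity of $\mathbb P[T=\nu]$ from the balancedness assumption (or simply from $\phi_\ell(\nu)$ being well defined), not from ``every symbol has positive probability,'' which need not hold position by position for a finite random grammar.
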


\begin{proof}
Let \(A\) be the level-\((\ell+1)\) parent of the branch containing \(T\), let \(B\) be the level-\((\ell+1)\) parent of the sibling branch containing \(Y\), and let \(G\) be their shared level-\((\ell+2)\) parent. The local graphical structure is \(G\to(A,B)\), \(A\to T\), and \(B\to Y\). By unambiguity, observing \(T=\nu\) determines \(A=\mathrm{par}_\ell(\nu)\). By the context-free Markov structure, conditioned on \(A\), the rest of the tree is independent of which production rule was used to generate \(T\). Hence, if \(\nu\in\mathcal S_{\ell,a}\),
\[
    \mathbb P[Y=y\mid T=\nu]=\mathbb P[Y=y\mid A=a].
\]
The right-hand side depends on \(\nu\) only through its parent \(a\). Therefore all tuples in the same synonym class have the same context vector.
\end{proof}

\paragraph{Concentration of empirical context vectors.}

Fix a level \(\ell\) and a grammatical tuple \(\nu\in\mathcal S_\ell\). Let \(N_\nu\) be the number of samples in which the chosen cousin tuple equals \(\nu\), and let \(Y_1,\ldots,Y_{N_\nu}\) be the corresponding target values. Conditional on \(T=\nu\), these are independent draws from the conditional law of \(Y\), and the empirical context vector
\[
    \widehat\phi_\ell(\nu)=\frac1{N_\nu}\sum_{q=1}^{N_\nu} e_{Y_q}
\]
is the empirical mean of \(N_\nu\) independent one-hot vectors with mean \(\phi_\ell(\nu)=\mathbb E[e_Y\mid T=\nu]\).

\begin{lemma}[Context-vector concentration]
\label{lem:app-context-concentration}
There is a constant \(C_0>0\) such that, for any \(\eta\in(0,e^{-1}]\),
\[
    \mathbb P\!\left[\|\widehat\phi_\ell(\nu)-\phi_\ell(\nu)\|_2>C_0\sqrt{\frac{\log(1/\eta)}{N_\nu}}\,\middle|\,N_\nu\right]\leq \eta.
\]
\end{lemma}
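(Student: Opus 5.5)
Conditional on \(N_\nu\) (and on the event that these \(N_\nu\) samples are the ones with \(T=\nu\)), the vectors \(e_{Y_1},\ldots,e_{Y_{N_\nu}}\) are i.i.d.\ with mean \(\phi_\ell(\nu)\). This holds because each sample is drawn independently from the RHM, and the selection event \(\{T^{(p)}=\nu\}\) concerns only the tuple. The plan is to treat \(\widehat\phi_\ell(\nu)-\phi_\ell(\nu)\) as a normalized sum of i.i.d.\ bounded, centered random vectors in \(\mathbb R^v\) and apply a dimension-free vector concentration inequality. We must avoid a coordinatewise Hoeffding bound plus a union bound over the \(v\) coordinates, since that would introduce a spurious \(\log v\) (or \(\sqrt v\)) factor.

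Write \(X_q:=e_{Y_q}-\phi_\ell(\nu)\). The key facts are \(\mathbb E X_q=0\) and \(\|X_q\|_2\le \|e_{Y_q}\|_2+\|\phi_\ell(\nu)\|_2\le 2\), since \(\phi_\ell(\nu)\in\Delta^{v-1}\) has \(\ell_2\)-norm at most \(1\). Moreover,
\[
\mathbb E\|X_q\|_2^2=1-\|\phi_\ell(\nu)\|_2^2\le 1 .
\]
Let \(f(X_1,\ldots,X_N)=\|\frac1N\sum_q X_q\|_2\). The argument has two steps.

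\textbf{Expectation.} By independence and centering,
\[
\mathbb E f\le \Big(\mathbb E\big\|\tfrac1N\textstyle\sum_q X_q\big\|_2^2\Big)^{1/2}=\Big(\tfrac1N\mathbb E\|X_1\|_2^2\Big)^{1/2}\le N^{-1/2}.
\]

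\textbf{Deviation.} Changing a single \(X_q\) changes \(f\) by at most \(4/N\), by the triangle inequality. McDiarmid's bounded-differences inequality therefore gives
\[
\mathbb P\big[f>\mathbb E f+t\,\big|\,N\big]\le \exp\!\big(-N t^2/8\big).
\]
Setting \(t=\sqrt{8\log(1/\eta)/N}\) yields failure probability at most \(\eta\). Alternatively, one could cite Pinelis' Hoeffding inequality in Hilbert space directly.

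Combining the two steps, with probability at least \(1-\eta\),
\[
\|\widehat\phi_\ell(\nu)-\phi_\ell(\nu)\|_2\le \frac{1+\sqrt{8\log(1/\eta)}}{\sqrt{N_\nu}}.
\]
The restriction \(\eta\le e^{-1}\) gives \(\log(1/\eta)\ge1\), so the additive \(1\) is absorbed and we may take \(C_0=1+\sqrt8\).

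The only delicate point is justifying the conditional i.i.d.\ structure. \(N_\nu\) is random, so I will argue that, conditional on the full index set \(\{p: \widehat T^{(p)}_\ell=\nu\}\), the targets are independent draws from \(\mathbb P[Y\mid T=\nu]\). This holds because the \(P\) samples are independent and each sample's \((T,Y)\) pair is drawn from the joint law. At level \(\ell>0\), the induction of the main theorem will be run on the event that lower levels were decoded exactly, so that \(\widehat T=T\). The bound is then uniform in \(v\) and \(\ell\), which is exactly what the level-by-level union bound in the theorem needs.
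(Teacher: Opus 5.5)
Your proof is correct and follows the paper's argument: bound the expectation by $N^{-1/2}$ via the second moment, apply McDiarmid's bounded-differences inequality, and absorb the $N^{-1/2}$ term using $\log(1/\eta)\ge 1$. The paper uses the sharper bounded difference $\sqrt2/N$ (since $\|e_y-e_{y'}\|_2\le\sqrt2$), giving $\exp(-Nt^2)$ instead of your $\exp(-Nt^2/8)$, which only changes $C_0$. For levels $\ell>0$, the paper does not condition on the data-dependent event of exact lower-level decoding. It applies the lemma to oracle vectors built from the true latents and notes that these coincide with the algorithm's vectors on the good event; your closing remark should be implemented the same way.
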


\begin{proof}
Condition on \(N_\nu=n\geq1\).
Define
\[
    U_q:=e_{Y_q}-\phi_\ell(\nu),
    \qquad
    \bar U:=\frac1n\sum_{q=1}^n U_q.
\]
Then \(\mathbb E[U_q]=0\) and
\[
    \widehat\phi_\ell(\nu)-\phi_\ell(\nu)=\bar U.
\]
Since the \(U_q\)'s are independent and mean zero, the cross terms vanish, and
\[
\begin{aligned}
    \mathbb E\|\bar U\|_2^2
    &=
    \mathbb E\left\|
        \frac1n\sum_{q=1}^n U_q
    \right\|_2^2 \\
    &=
    \frac1{n^2}\sum_{q=1}^n \mathbb E\|U_q\|_2^2 \\
    &\leq \frac1n,
\end{aligned}
\]
where we used
\[
    \mathbb E\|e_Y-\phi_\ell(\nu)\|_2^2
    =
    1-\|\phi_\ell(\nu)\|_2^2
    \leq 1.
\]
Thus,
\[
    \mathbb E\|\bar U\|_2\leq n^{-1/2}.
\]

Now consider the function
\[
    F(Y_1,\ldots,Y_n):=\|\bar U\|_2.
\]
Changing one observation changes \(\bar U\) by at most \(\sqrt2/n\), and hence changes \(F\) by at most \(\sqrt2/n\). By McDiarmid's bounded-differences inequality,
\[
    \mathbb P\!\left[
        F-\mathbb EF>t
    \right]
    \leq
    \exp(-n t^2).
\]
Taking
\[
    t=\sqrt{\frac{\log(1/\eta)}{n}},
\]
we get, with probability at least \(1-\eta\),
\[
    F
    \leq
    \frac1{\sqrt n}
    +
    \sqrt{\frac{\log(1/\eta)}{n}}.
\]
Since \(\eta\leq e^{-1}\), we have \(\log(1/\eta)\geq1\), so the first term is absorbed into the second. Thus, for a constant \(C_0\),
\[
    \|\widehat\phi_\ell(\nu)-\phi_\ell(\nu)\|_2
    =
    \|\bar U\|_2
    \leq
    C_0\sqrt{\frac{\log(1/\eta)}{n}}
\]
with probability at least \(1-\eta\).
\end{proof}

\paragraph{Proof of the theorem.}

At each level \(\ell\), let \(T_\ell\) be the level-\(\ell\)
tuple to be clustered and \(Z_\ell\) the level-\(\ell\) target in the
sibling branch. For sample \(p\), denote the corresponding true variables
by \(T_\ell^{(p)}\) and \(Z_\ell^{(p)}\)

We now prove \Cref{thm:iterative-latent-clustering-formal}.

\begin{proof}[Proof of \Cref{thm:iterative-latent-clustering-formal}]
We first define a good event using the true latent variables. For every level \(\ell=0,\ldots,L-2\) and every grammatical tuple \(\nu\in\mathcal S_\ell\), let
\[
    N_{\ell,\nu}:=\sum_{p=1}^P \mathbf 1\{T_\ell^{(p)}=\nu\}
\]
be the number of samples in which the true level-\(\ell\) cousin tuple equals \(\nu\), and let
\[
    \widetilde\phi_\ell(\nu):=
    \frac{1}{N_{\ell,\nu}}\sum_{p=1}^P e_{Z_\ell^{(p)}}\mathbf 1\{T_\ell^{(p)}=\nu\}
\]
be the corresponding oracle empirical context vector.

By balancedness,
\[
    \mathbb P[T_\ell=\nu]\geq \frac{c_{\rm bal}}{vm}.
\]
Thus \(N_{\ell,\nu}\) is binomial with mean at least \(c_{\rm bal}P/(vm)\). A Chernoff bound and a union bound over all \((L-1)vm\) pairs \((\ell,\nu)\) imply that, if
\[
    P\gtrsim vm\log\frac{Lvm}{\delta},
\]
then, with probability at least \(1-\delta/2\),
\[
    N_{\ell,\nu}\gtrsim \frac{P}{vm}
\]
simultaneously for all \(\ell\) and \(\nu\).

On this count event, Lemma~\ref{lem:app-context-concentration}, applied with \(\eta=\delta/(2Lvm)\), and another union bound give
\[
    \max_{\ell,\nu}
    \|\widetilde\phi_\ell(\nu)-\phi_\ell(\nu)\|_2
    \lesssim
    \sqrt{\frac{vm}{P}\log\frac{Lvm}{\delta}}
\]
with probability at least \(1-\delta/2\). Therefore, if
\[
    P\gtrsim \frac{vm^3}{1-f}\log\frac{Lvm}{\delta},
\]
then, with probability at least \(1-\delta\),
\[
    \|\widetilde\phi_\ell(\nu)-\phi_\ell(\nu)\|_2
    \leq
    \frac18 c_{\rm sep}\frac{\sqrt{1-f}}{m}
\]
for all \(\ell\) and all \(\nu\in\mathcal S_\ell\). Call this simultaneous event \(\mathcal E\).

We now show that on \(\mathcal E\), the algorithm succeeds deterministically. At level \(0\), the variables are visible tokens, so \(\widehat h^{(0)}=h^{(0)}\). Suppose inductively that level \(\ell\) has been recovered exactly, up to a permutation of labels. Then the recovered level-\(\ell\) tuples are the true level-\(\ell\) tuples, up to relabeling. Consequently, the empirical context vectors computed by the algorithm are exactly the oracle vectors \(\widetilde\phi_\ell(\nu)\), up to permutations of tuple labels and output coordinates. These permutations preserve distances.

Thus, on \(\mathcal E\), every empirical context vector lies within
\[
    \frac18 c_{\rm sep}\frac{\sqrt{1-f}}{m}
\]
of its true parent center. By separation,
\[
    \min_{a\neq b}\|\Phi_{\ell,a}-\Phi_{\ell,b}\|_2
    \geq
    c_{\rm sep}\frac{\sqrt{1-f}}{m}.
\]
The stable clustering assumption therefore implies that the clustering step recovers the true synonym partition
\[
    \mathcal S_\ell
    =
    \bigsqcup_{a\in\mathcal V_{\ell+1}}\mathcal S_{\ell,a}
\]
up to a permutation of labels. Assigning each tuple to its cluster recovers level \(\ell+1\), again up to a permutation of labels.

Iterating this deterministic induction from \(\ell=0\) to \(L-2\) recovers \(h^{(1)},\ldots,h^{(L-1)}\) and all production-rule classes below the root. Since \(\mathcal E\) holds with probability at least \(1-\delta\), the theorem follows.

The required sample size is
\[
    P\gtrsim vm\log\frac{Lvm}{\delta}
    +
    \frac{vm^3}{1-f}\log\frac{Lvm}{\delta}.
\]
Thus, up to logarithmic factors,
\(P \gtrsim vm^3/(1-f)\) suffices for recovery, and we write 
\[
    P_{\rm ILC}\asymp \frac{vm^3}{1-f}
\]
for this threshold.
\end{proof}

\section{Stacked latent-clustering details and additional results\label{sec:app:slc}}

This appendix gives implementation details and additional diagnostics for the SLC network introduced in \Cref{sec:neuralclustering}. We use the RHM notation of \Cref{sec:rhm}: true latent variables are denoted by $h^{(\ell)}_i$, while learned SLC tokens are denoted by $\widehat h^{(\ell)}_i$.

\subsection{SLC architecture details \label{sec:app:slc_arch_details}}
We encode RHM input strings of size $s^L$ into one-hot feature vectors of dimension $\mathbb{R}^{s^L \times d_h}$, where $d_h \gg v$ is the dimension into which we tokenize learned latents. This initializes the level-$0$ tokens as $\widehat h^{(0)}_i=x_i$.
We use a single dimension $d_h$ for the cluster codebook and for the token vocabulary passed between SLC modules: each clusterer assigns labels in $\{1,\ldots,d_h\}$, these labels become the tokens for the next module, and predictors therefore output categorical distributions over $d_h$ token identities.

A SLC model for an RHM of depth $L$ is comprised of $L-1$ SLC modules. Module $\ell$ reduces the learned level-$\ell$ sequence to an encoding of the level-$(\ell+1)$ latents. For a module at level $\ell$, the tensor shapes are
\[
\begin{array}{rcl}
\widehat h^{(\ell)}
    &\in& \mathbb R^{s^{L-\ell}\times d_h},\\[1mm]
x_u^{(\ell)}
    :=
    \big(\widehat h^{(\ell)}_{(u-1)s+1},\ldots,
    \widehat h^{(\ell)}_{us}\big)
    &\in& \mathbb R^{s\times d_h},\\[1mm]
\widehat\phi^{(\ell)}_u
    =
    \mathrm{Pred}^{(\ell)}(x_u^{(\ell)})
    &\in& \mathbb R^{s\times(s-1)\times s\times d_h},\\[1mm]
\mathrm{vec}\,\widehat\phi^{(\ell)}_u
    &\in& \mathbb R^{s^2(s-1)d_h},\\[1mm]
q_u^{(\ell+1)}
    =
    \mathrm{Clust}^{(\ell)}(\mathrm{vec}\,\widehat\phi^{(\ell)}_u)
    &\in& \Delta^{d_h-1},\\[1mm]
\widehat h^{(\ell+1)}
    &\in& \mathbb R^{s^{L-\ell-1}\times d_h}.
\end{array}
\]
Here $u=1,\ldots,s^{L-\ell-1}$ indexes level-$\ell$ patches, and $d_{h_2}$ is the hidden width of the CNN layers below. The dimensions of $\widehat\phi^{(\ell)}_u$ enumerate, respectively, the possible position of the input patch within its grandparent, the target cousin patch, the target token within that patch, and the $d_h$ token identity. This task is easiest for $d_h = v$, as clustering is architecturally enforced, and is hardest for $d_h\ge mv$ (when synonyms can be uniquely encoded without clustering). Unless otherwise noted, we will always use $d_h=mv$.

A SLC module is composed of a predictor and a clusterer. This arrangement is depicted in ~\Cref{fig:slc_schematic}. 

\begin{figure}
    \centering
    \begin{tikzpicture}[
    >={Stealth[length=2.2mm]},
    every node/.style={font=\small},
    patch/.style={ellipse, draw, minimum width=8mm, minimum height=11mm, inner sep=1pt},
    box/.style={draw, rectangle, rounded corners=1pt, inner sep=4pt, align=center},
    arr/.style={->, thick}
    ]

    \node (z1lab) at (0, 1.3) {$\widehat h^{(\ell)}_1$};
    \node (z2lab) at (0, 0.7) {$\widehat h^{(\ell)}_2$};
    \node (a) at (1.0, 1.3) {$a$};
    \node (b) at (1.0, 0.7) {$b$};
    \node[patch, fit=(a)(b), inner sep=0pt] (ab) {};

    \node (z3lab) at (0, -0.5) {$\widehat h^{(\ell)}_3$};
    \node (z4lab) at (0, -1.1) {$\widehat h^{(\ell)}_4$};
    \node (c) at (1.0, -0.5) {$c$};
    \node (d) at (1.0, -1.1) {$d$};
    \node[patch, fit=(c)(d), inner sep=0pt] (cd) {};

    \node at (1.0, -2.0) {$\vdots$};

    \node (pred) at (3.2, 1.0) {$\mathrm{Pred}^{(\ell)}$};

    \node (p3) at (5.3, 1.45) {$p(\widehat h^{(\ell)}_3\,|\,a,b)$};
    \node (p4) at (5.3, 0.55) {$p(\widehat h^{(\ell)}_4\,|\,a,b)$};

    \draw[decorate, decoration={brace, amplitude=4pt}]
        ($(p3.east) + (0.05,0.15)$) -- ($(p4.east) + (0.05,-0.15)$);
    \node (P1) at (6.8, 1.0) {$\widehat\phi^{(\ell)}_1$};

    \node[box] (cel) at (3.6, -0.9) {cross entropy loss};

    \node (clust) at (8.5, 1.0) {$\mathrm{Clust}^{(\ell)}$};

    \node (sm) at (10.5, 1.0) {$\SM$};

    \node (zout) at (12.1, 1.0) {$\widehat h^{(\ell+1)}_1$};

    \node[box, font=\footnotesize, align=center] (ccl) at (10.7, -0.9)
        {contrastive\\clustering loss};

    \node (Pi) at (P1 |- 0,-2.4) {$\widehat\phi^{(\ell)}_i$};
    \node at ($(Pi)+(-0.9,0)$) {$\cdots$};
    \node (zi) at (zout |- Pi) {$\widehat h^{(\ell+1)}_i$};

    \draw[arr] (ab.east) -- (ab.east -| pred.west);
    \draw[arr] (cd.east) -- (cd.east -| cel.west);

    \draw[arr] (pred.east) -- (pred.east -| p3.west);

    \draw[arr] (P1.east) -- (P1.east -| clust.west);

    \draw[arr] (sm.east) -- (sm.east -| zout.west);
    \draw[arr] (clust.east) -- (clust.east -| sm.west);

    \coordinate (junction) at (P1 |- cel.east);
    \draw[thick] (P1.south) -- (junction);
    \draw[arr] (junction) -- (cel.east);
    \draw[arr] (junction) -- (ccl.west);

    \coordinate (pijoin) at ($(ccl.south)+(-0.4,0)$);
    \draw[arr] (Pi.east) -- (pijoin |- Pi) -- (pijoin);

    \coordinate (zijoin) at ($(ccl.south)+(0.4,0)$);
    \draw[arr] (zi.west) -- (zijoin |- zi) -- (zijoin);

    \coordinate (z1join) at (ccl.east |- cel.east);
    \draw[thick] (zout.south) -- (zout |- z1join);
    \draw[arr] (zout |- z1join) -- (z1join);

    \coordinate (z1right) at ($(zout.east)+(0.6,0)$);
    \draw[arr] (zout.east) -- (zout.east -| z1right);

    \end{tikzpicture}

    \caption{\textbf{Schematic of a single SLC module, with the layer-wise prediction and clustering losses.}}
    \label{fig:slc_schematic}
\end{figure}

\begin{figure}
    \centering
    \resizebox{0.5\linewidth}{!}{\begingroup
\providecommand{\clusteringSchematicPanelSep}{}
\providecommand{\clusteringSchematicXSep}{}
\providecommand{\clusteringSchematicLevelOneY}{}
\providecommand{\clusteringSchematicLevelTwoY}{}
\providecommand{\clusteringSchematicRootY}{}
\renewcommand{\clusteringSchematicPanelSep}{8mm}
\renewcommand{\clusteringSchematicXSep}{0.80}
\renewcommand{\clusteringSchematicLevelOneY}{-2.3}
\providecommand{\clusteringSchematicPDeltaY}{1.3}
\providecommand{\clusteringSchematicCDeltaY}{0.7}
\renewcommand{\clusteringSchematicLevelTwoY}{-0.25}
\renewcommand{\clusteringSchematicRootY}{0.6}
\definecolor{ptcOchre}{RGB}{184,119,0}
\definecolor{ptcTeal}{RGB}{0,128,128}
\definecolor{ptcViolet}{RGB}{118,80,160}
\definecolor{ptcBlue}{RGB}{42,111,180}
\definecolor{ptcRed}{RGB}{190,72,72}
\definecolor{rhmTeal}{RGB}{0,128,128}
\definecolor{rhmViolet}{RGB}{118,80,160}
\providecommand{\clusteringSchematicXSep}{0.80}
\providecommand{\clusteringSchematicLevelOneY}{-1.20}
\providecommand{\clusteringSchematicLevelTwoY}{1.75}
\providecommand{\clusteringSchematicRootY}{2.12}
\begin{tikzpicture}[
    >=Stealth,
    node dot/.style={circle, draw, thick, fill=white, inner sep=0pt, minimum size=5.6mm},
    leaf dot/.style={circle, draw, thick, fill=white, inner sep=0pt, minimum size=4.6mm},
    module/.style={regular polygon, regular polygon sides=3, shape border rotate=0, draw, thick, fill=white, inner sep=0pt, minimum size=7.2mm, font=\footnotesize},
    target/.style={draw, thick, fill=white, minimum width=13mm, minimum height=6.4mm, inner sep=0pt},
    latent/.style={circle, draw, thick, fill=white, inner sep=0pt, minimum size=3.2mm},
    tree edge/.style={line width=0.8pt},
    signal/.style={->, line width=1.1pt}
]
    \path[use as bounding box] (-2,-4.55) rectangle (3.85,0.0);
    \IfFileExists{figs/clustering_schematic_ptc_module_panel_full_arrows.tikz}{\input{figs/clustering_schematic_ptc_module_panel_full_arrows.tikz}}{\input{clustering_schematic_ptc_module_panel_full_arrows.tikz}}
\end{tikzpicture}
\endgroup}
    \caption{\textbf{Multiple predictions improve signal}. We train the predictor and clusterer modules with weight sharing, and train cross-entropy prediction at all possible positions and cousin targets. \label{fig:slc_targetting}}
\end{figure}

\paragraph{Predictor.} The predictor operates on an $s$-tuple patch and predicts all $s(s-1)$ tokens with the same grandparent (cf. \Cref{fig:slc_targetting} for a visualization). Using all cousin targets increases the available signal and improves prefactors, although it does not change the asymptotic sample-complexity scaling. Because the same patch can occupy any of the $s$ positions under its grandparent and we use weight sharing across positions, the predictor outputs $s^2(s-1)$ distributions; for each observed patch, only the $s(s-1)$ distributions matching its actual position enter the loss.
The predictor is implemented in terms of convolutional layers, as 
\begin{equation}
    \widehat\phi^{(\ell)}(x) = \mathrm{Pred}^{(\ell)}(x) = (\SM\circ \CNN_3 \circ A\circ \CNN_2 \circ A \circ \CNN_1)(x)
\end{equation}
where $A = \relu \circ \mathrm{BN}$ denotes the activation function, $\CNN_1 : \mathbb{R}^{s\times d_h} \rightarrow \mathbb{R}^{d_{h_2}}$ is a stride $s$ 1d convolutional layer, while $\CNN_2  : \mathbb{R}^{ d_{h_2}} \rightarrow \mathbb{R}^{d_{h_2}}$ and $\CNN_3 : \mathbb{R}^{d_{h_2}}\rightarrow \mathbb{R}^{s\times (s-1)\times s \times d_h}$ are both stride 1. The softmax operation $\SM$ acts on the last dimension so that $\sum_{a=1}^{d_h} \widehat\phi^{(\ell)}(x)_{r\rho ta} =1 $. This last dimension is therefore the marginal probability distribution over neighbouring tokens, conditioned on the input patch position $r$, target cousin-patch index $\rho$, and target token position $t$.

We train the predictor with a cross-entropy prediction loss. Fix a grandparent index $g\in\{1,\ldots,s^{L-\ell-2}\}$ and an input patch position $r\in\{1,\ldots,s\}$ within that grandparent, so that $u=(g-1)s+r$ and the input patch is $x_u^{(\ell)}$. For a target cousin patch position $r'\ne r$, let $\rho_r(r')\in\{1,\ldots,s-1\}$ denote the index of $r'$ among the patch positions excluding $r$. The level-$\ell$ token at target position $t$ inside target patch $r'$ has absolute index
\[
    i(g,r',t):=((g-1)s+r'-1)s+t .
\]
For this input patch, the prediction loss is
\begin{equation}
    \mathcal{L}_\mathrm{pred}(g,r)
    =
    -\sum_{r'\ne r}\sum_{t=1}^s \sum_{a=1}^{d_h}
    \widehat h^{(\ell)}_{i(g,r',t),a}
    \log\!\left(
        \widehat\phi^{(\ell)}(x_u^{(\ell)})_{r,\rho_r(r'),t,a}
    \right).
\end{equation}
The predictor loss for module $\ell$ is the mean of this quantity over all grandparent indices $g$ and input patch positions $r$.

\paragraph{Clusterer.} The clusterer in turn assigns soft cluster labels to each patch's prediction vector -- this amounts to assigning a learned level-$(\ell+1)$ latent label.
\begin{equation}
    q^{(\ell+1)} = \mathrm{Clust}^{(\ell)}(\widehat\phi^{(\ell)}) = (\SM \circ \CNN_5 \circ A \circ \CNN_4 )(\widehat\phi^{(\ell)})
\end{equation}
where $\widehat\phi^{(\ell)}$ denotes the flattened output of $\mathrm{Pred}^{(\ell)}$, $\CNN_4 : \mathbb{R}^{s^2(s-1)d_h} \rightarrow \mathbb{R}^{d_{h_2}} $ and $\CNN_5 : \mathbb{R}^{d_{h_2}} \rightarrow \mathbb{R}^{d_h} $ are stride-1 CNNs. The soft code $q^{(\ell+1)}$ is the learned token $\widehat h^{(\ell+1)}$ passed to the next module. \textit{A priori} we do not specify that $d_h = v$, so the clusterer could choose to instead assign each synonym of a patch into its own cluster. Indeed, for finite data, differing but synonymous patches will imply different contexts due to finite sampling effects. We address this by using a contrastive clustering loss that penalizes assignments of sufficiently similar predictions into different clusters.

Let $q_i = \mathrm{Clust}^{(\ell)}(\widehat\phi^{(\ell)}_i)$ denote the soft cluster assignment for prediction vector $\widehat\phi^{(\ell)}_i$, where the index $i$ is taken to be over patch positions and batches, and we drop the $^{(\ell)}$ superscript for brevity. The predictions are then centred, so that $\widehat\phi^{(\ell)}_i \rightarrow \widehat\phi^{(\ell)}_i - \langle \widehat\phi^{(\ell)}_j  \rangle_j$. The similarity $S_{ij}$ between two predictions is the scaled cosine similarity
\begin{equation}
    S_{ij} = \frac{1}{2}\left(1+\frac{\widehat\phi^{(\ell)}_i\cdot\widehat\phi^{(\ell)}_j}{||\widehat\phi^{(\ell)}_i||_2||\widehat\phi^{(\ell)}_j||_2}\right)
\end{equation}
of these centred predictions.
The clustering loss is then
\begin{equation}
    \mathcal{L}_\mathrm{clust} = \langle \mathcal{L}_\mathrm{sim}(q_i,q_j,S_{ij}) + \lambda_{\mathrm{sep}}\mathcal{L}_\mathrm{sep}(q_i,q_j,S_{ij})\rangle_{ij} + \lambda_\mathrm{sparsity} \langle \mathcal{L}_\mathrm{sparsity}(q_i)\rangle_i
\end{equation}
where the $\langle \cdot\rangle_{ij}$ indicates an average over patch positions and batches. The similarity loss component
\begin{equation}
    \mathcal{L}_\mathrm{sim}(q_i,q_j,S_{ij}) = (q_i-q_j)\cdot(q_i-q_j)\relu(S_{ij} - S_{m})
\end{equation}
serves to penalize predictions that are sufficiently similar (more than the margin $S_m$) but assigned to different clusters (i.e. with non-zero $\Delta q$) -- this loss ensures that clusters consist of similar predictions. Alone, of course, the model could simply cluster ALL predictions into a single cluster. Therefore, we include a separation loss
\begin{equation}
    \mathcal{L}_{\mathrm{sep}}(q_i,q_j,S_{ij}) = q_i\cdot q_j(1-S_{ij})
\end{equation}
which penalizes dissimilar (i.e. large $1-S_{ij}$) predictions that are assigned into overlapping clusters (i.e. $q_i\cdot q_j > 0$). Finally, we introduce
\begin{equation}
    \mathcal{L}_\mathrm{sparsity}(q_i) = -q_i\cdot q_i
\end{equation}
which tends to maximize the $L_2$ norm of the cluster assignment and therefore the sparsity (since $||q_i||_1 = 1$ due to the softmax). Preliminary experiments demonstrated that this sparsity reward improves training stability and downstream interpretability.

A general caveat of employing contrastive losses is that memory costs can explode. Employed na\"ively, the first SLC module would compute $s^{2(L-1)}$ comparisons, each of which involves a potentially costly dot-product of a $s^2(s-1)d_h$ vector. In practice for each batch, we select a random subset of $N_\mathrm{compare}$ patch positions and batches on which to compute the clustering loss for memory constraints.

\paragraph{Training protocol and stabilizers.} To keep the prediction problem at depth $\ell$ comparable to the prediction problem at depth $0$, we tokenize clusterer outputs by means of a softmax operation, $\SM(X,T) = \exp(X/T)/\sum_i\exp(X_i/T)$. For $T=0$, this is hard label assignment. Preliminary experiments found that $T<1$ can improve convergence, but here we report results for $T=1$.

The clustering codebook dimension $d_h$ is another important stabilizer. If the space of possible labels is large, each input can be assigned a unique label; this is a failure to cluster, because synonyms of a latent are not assigned the same symbol in the next layer. The input vocabulary for layer $\ell+1$ then increases by a factor $m$, making the prediction task for layer $\ell+1$ intractable. This problem is exacerbated by noisy or finite sampling, where synonymous inputs do not produce identical predictions. This can be resolved by an architectural bottleneck, i.e. setting  $d_h = v$, as is done for the ILC. For most natural data, this dimension is unknown \textit{a priori}; to simulate these conditions we set the label dimension to $d_h>mv$ (the hardest case) and instead use the contrastive loss to set the scale of attraction between similar predictions.

To mitigate representation collapse, we use a teacher-student framework, in which latent targets are obtained from a teacher network whose weights are simply the exponentially lagged weights of the student. After each step of gradient descent, the teacher weights  $W^{(T)}$ are updated towards the student weights $W^{(S)}$ with  $W^{(T)} \leftarrow (1-\alpha_\mathrm{ema})W^{(T)} + \alpha_\mathrm{ema} W^{(S)}$, where $1/\alpha_\mathrm{ema}$ sets the exponential moving average timescale. The prediction loss therefore uses teacher representations to obtain targets $x_{:}$, which are evaluated against student predictions. The clustering loss meanwhile, computes the similarity $S_{ij}$ using the teacher's predictions and subsequently uses the student's clustering assignments $q_i$, $q_j$. As an alternative to the teacher-student framework, we highlight in \Cref{app:sec:slc_local_rules} that collapse can also be overcome by introducing stop-gradients between SLC modules, effectively making learning entirely local, reminiscent to predictive coding. Finally, we perform Jacobian descent, computing the gradient for each loss independently, and aggregate the gradients by means of the UPGrad algorithm. This selects an update in the dual-cone of the gradients to avoid conflicting updates~\cite{quintonJacobianDescentMultiobjective2025}, which we found in preliminary experiments reduced the need for careful hyperparameter selection.

Unless otherwise specified, SLC models are trained using AdamW and Jacobian descent (via TorchJD and UPGrad \citep{quintonJacobianDescentMultiobjective2025}) and following hyperparameters: $\alpha_\mathrm{ema} = 0.015$, $\mathrm{lr}=3\times10^{-3}$, $\mathrm{wd} = 10^{-3}$, $d_{h_2} = 150$, batch size 32, clustering dimension $d_h=128$,  $S_m = 0.8$, $\lambda_\mathrm{sep} = 0.5$, $N_\mathrm{compare}=300$, and $\lambda_\mathrm{spars} = 10^{-2}$. 
These were optimized by hyperparameter sweeps conducted at $L=5$ for $v=16$, $m=8$, $s=3$ using Optuna's Tree-Parzen-Estimator sampler and Hyperband pruner \cite{akibaOptunaNextgenerationHyperparameter2019}.

We use a validation set of 2000 samples to evaluate the model throughout training. When training classifiers on the final SLC representation, we use early stopping to select the best pretraining checkpoint and classification checkpoint based on this validation set performance, but report test set performance on a held-out test set of 2000 samples. In \Cref{fig:slc_L_scaling} the test sets are smaller, with 1000 samples.

\subsection{Training with local rules \label{app:sec:slc_local_rules}}
In \Cref{fig:slc_stop_grads}, we ablate the global training machinery while also testing increasingly biologically plausible training conditions. We first prevent gradient flow between SLC layers (denoted ``Layer SG'' for layerwise stop-gradient), then disable the exponential moving average (EMA) teacher network, and finally disable gradient flow between the prediction and clustering submodules. These changes remove the long-range backpropagating error signal and the biologically implausible time-delayed duplicate teacher network. When disabling EMA but allowing gradients to flow from the clustering loss through to the predictor, we observe representation collapse. We credit this to the clustering loss overpowering the prediction loss: trivial predictions are easy to cluster perfectly at the expense of prediction accuracy. Preventing gradient flow between prediction and clustering networks prevents this failure mode, showing that SLC can still learn when updates are constrained to local signals.

\begin{figure}
    \centering
    \includegraphics[width=0.5\linewidth]{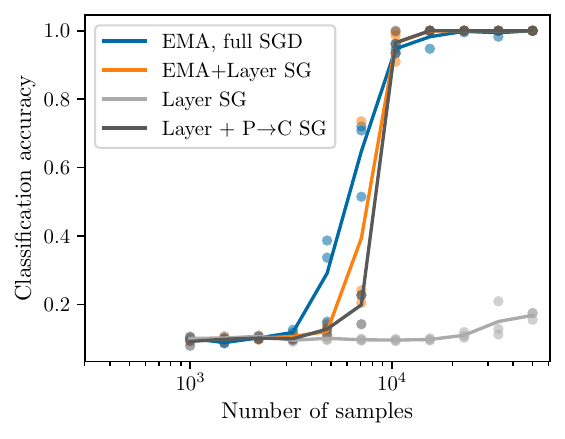}
    \caption{\textbf{Local learning suffices to learn the RHM.} We compare baseline training (blue) to increasingly local learning rules. ``Layer SG'' in legend indicates stopgradients between SLC layers, ``$P\rightarrow C$ SG'' indicates stopgradient between the predictor and clusterer submodules. EMA in legend indicates exponential-moving-average teacher is used as a target. RHM parameters are: $L=4, v=10, m=10, s=3$. Solid lines are average over three independent realizations.  }
    \label{fig:slc_stop_grads}
\end{figure}

\subsection{Training dynamics}
In \Cref{fig:slc_model_dynamics} we report the layer-wise accuracy and prediction loss of the different SLC modules. We evaluate the accuracy of module $\ell$ by taking the learned hard labels $\widehat h^{(\ell+1)}_i$ (with the $T=0$ hard label assignment) and comparing them with the ground-truth RHM latents $\latent{\ell+1}{i}$. We identify an optimal mapping between the assigned labellings and the true latents with the Kuhn-Munkres ``Hungarian'' algorithm, and report the accuracy attained with such a mapping.
\begin{figure}
    \centering
    \includegraphics[width=0.5\linewidth]{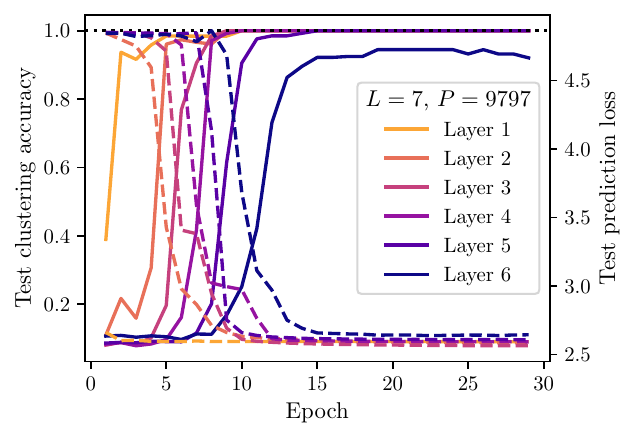}
    \caption{\textbf{Training proceeds layer-wise.} As each layer's clustering improves, the subsequent layer's prediction loss decreases, enabling that subsequent layer to then cluster.} \label{fig:slc_model_dynamics}
\end{figure}

\subsection{Independence of sample complexity with L}
In \Cref{fig:kmeans_and_slc_mL_plus_1} we check that the sample complexity scaling for ILC and SLC better supports $m^3$ rather than $m^{L+1}$. 
\begin{figure}
    \centering
    \includegraphics[width=0.95\linewidth]{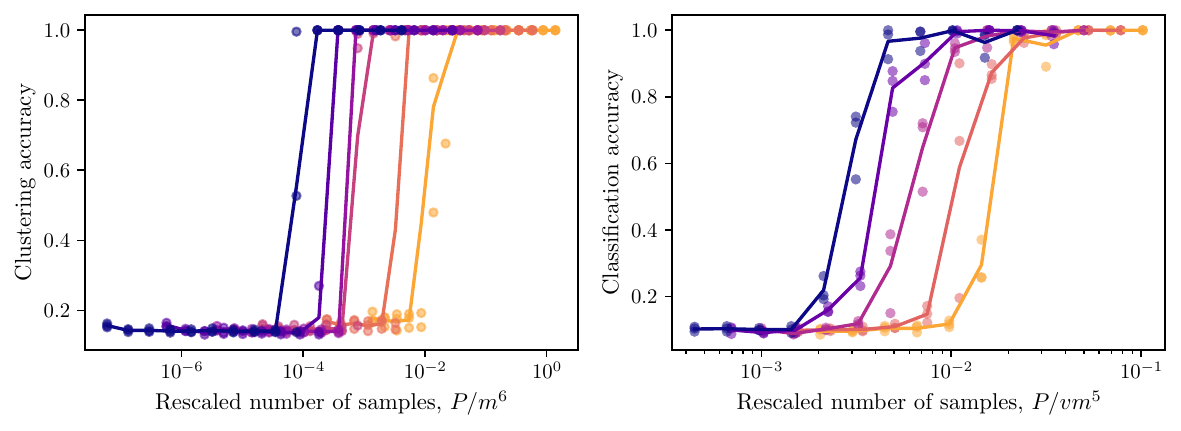}
    \caption{\textbf{Attempted curve collapse with token based SSL scaling.} Data are as in \Cref{fig:kmeans_and_slc}, but with the $m^{L+1}$ scaling that holds for token level SSL. The collapse of accuracy vs. training sample is significantly degraded. }
    \label{fig:kmeans_and_slc_mL_plus_1}
\end{figure}
Furthermore, in \Cref{fig:slc_L_scaling} we validate the sample complexity $P \sim m^3$ for systems with varying depth, up to $L=7$. 
\begin{figure}
    \centering
    \includegraphics[width=0.5\linewidth]{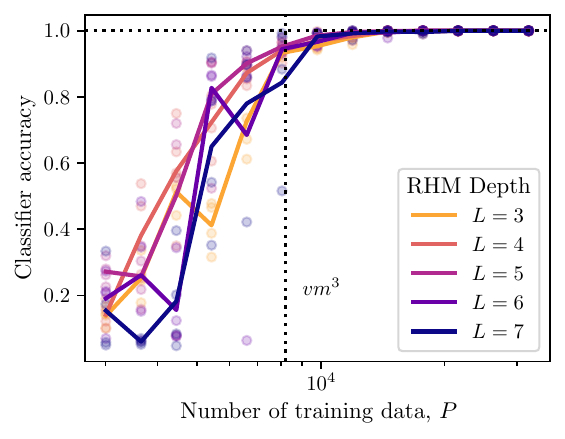}    \caption{\textbf{The SLC model exhibits a depth-independent sample complexity.} Accuracy of linear probes trained to classify the root latent $\latent{L}{1}$ from the final SLC representation.   RHM parameters are $s=3$, $m=8$, and $v=16$, with depth $L$ indicated by the legend. The pre-trained models were trained for 30 epochs, and the classifiers for the same duration and on the same dataset. Solid lines are an average over three independent RHM instantiations.}
    \label{fig:slc_L_scaling}
\end{figure}

\section{Further results on data2vec}
\label{app:data2vec}

\Cref{fig:app:d2vec_levels}--\textbf{left} shows the synonym clustering score for encoder layers to latents at different depths. All layers remain invariant to $\ell=L=4$ because there is not signal for which the network can learn the root latent. For latents at level $\ell < L$, we find clustering throughout the middle layers of the network. This indicates that clustering occurs throughout the early layers of data2vec. Furthermore, the fact that for latent $\ell$ encoder layer $\ell-1$ is the first to exhibit clustering of the synonyms indicates that latents are constructed layer-by-layer.  Deeper layers exhibit stronger clustering of low-level latents, up to layer 2. This inversion is reminiscent of an encoder-decoder architecture, and mimics what is observed in U-Net architectures~\cite{favero2025compositional}. 

\Cref{fig:app:d2vec_levels}--\textbf{right} tests the presence of linear traces of latents in the teacher target for different $m$ and $\ell$. This data collapses with $vm^3$, confirming that the transition to all latents being linearly encoded in the teacher occurs at the predicted $vm^3$ scaling.

In \Cref{fig:d2vec_offline} we test the scaling collapse for data2vec instances trained in the offline setting, thereby decoupling the number of gradient descent steps from the number of original observed samples. We find $vm^3$ provides a good collapse.

\begin{figure}
    \centering
    \includegraphics[width=1.05\linewidth]{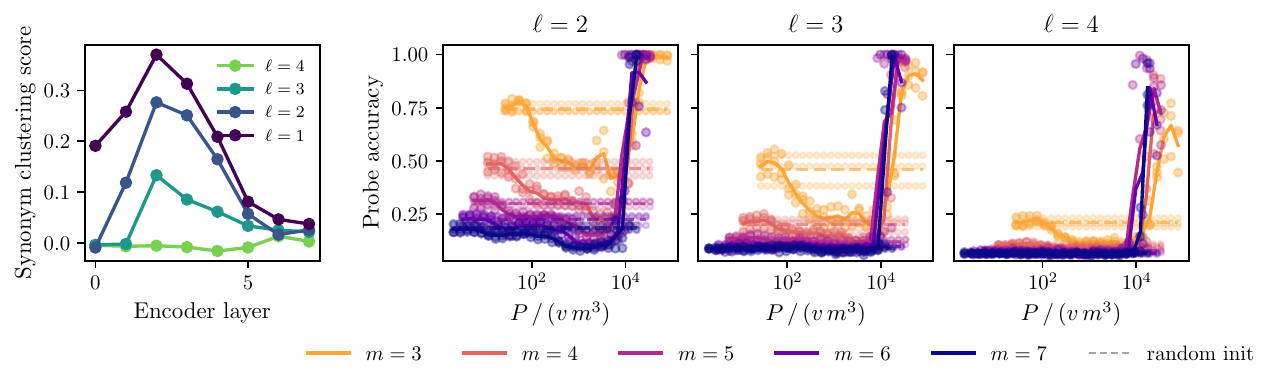}
    \vspace{-1em}
    \caption{
    \textbf{Left:} post-training synonym clustering score per encoder layer and RHM level $\ell$ ($m=6$). Values above $0$ indicate that synonymous level-$(\ell-1)$ tuples are clustered closer than non-synonymous ones. 
    \textbf{Right:} linear-probe accuracy for the latent $h^{(\ell)}$ from the teacher target, vs.\ rescaled sample size $P/(vm^3)$, for $\ell=2,3,4$. Curves collapse with $vm^3$ for every $\ell$.}
    \label{fig:app:d2vec_levels}\vspace{-1em}
\end{figure}

\begin{figure}
    \centering
    \includegraphics[width=.95\linewidth]{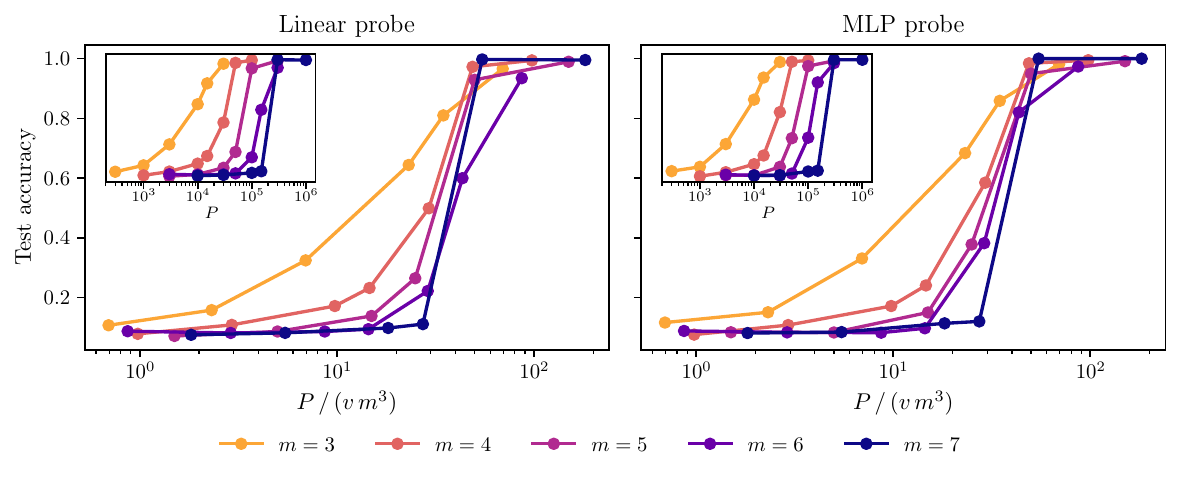}
    \caption{\textbf{Offline data2vec pretraining.} Linear and MLP probe test accuracy on the classification task as a function of the pretraining training-set size $P$. Each point is averaged over three seeds. Main axes: curves rescaled by $vm^3$. Insets: same data, raw $P$.}
    \label{fig:d2vec_offline}
\end{figure}

\begin{table}[h]
\centering
\caption{Data2vec hyperparameters.}
\label{tab:hyperparams_data2vec_online_m6}
\renewcommand{\arraystretch}{1.05}
\begin{tabular}{@{}l l l@{}}
\toprule
\textbf{Group} & \textbf{Hyperparameter} & \textbf{Value} \\
\midrule
\multirow{8}{*}{Encoder architecture}
  & hidden dim $d_{\text{model}}$  & $2048$ \\
  & attention heads $n_h$          & $32$ \\
  & head dim                       & $64$ \\
  & encoder layers $n_l$           & $8$ \\
  & FFN dim $d_{\text{ff}}$        & $8192 = 4\,d_{\text{model}}$ \\
  & non-linearity                  & GELU \\
  & dropout                        & $0.1$ \\
  & positional embeddings          & learned, $T \times d_{\text{model}}$ \\
  & token embeddings               & learned, $V_{\text{tok}}\!=\!v\!+\!2$ ids \\
  & LayerNorm placement            & pre-LN \\
\midrule
\multirow{6}{*}{data2vec objective}
  & mask probability               & $0.15$ \\
  & mask span length               & $1$ \\
  & top-$K$ teacher layers averaged & $K = 4$ \\
  & per-layer LayerNorm of targets & yes \\
  & global LayerNorm of targets    & no \\
  & regression head depth          & $1$ linear layer \\
  & loss                           & smooth L1, $\beta = 4$ \\
  & teacher EMA decay $\mu$        & $0.99$ (constant; no annealing) \\
\midrule
\multirow{8}{*}{Optimization}
  & optimizer                      & AdamW \\
  & $(\beta_1, \beta_2)$           & $(0.9, 0.98)$ \\
  & weight decay                   & $0.01$ (excl. bias / LayerNorm) \\
  & learning rate                  & $\eta = 10^{-4}$ (constant) \\
  & LR schedule                    & none (no warmup, no decay) \\
  & gradient clipping              & global L2 norm $\leq 1.0$ \\
  & batch size $B$                 & $512$ \\
  & training steps $T_{\text{steps}}$ & $262\,144$ \\
\midrule
\multirow{6}{*}{Probes}
  & pooling                        & mean over positions \\
  & linear probe                   & \texttt{Linear}$(d_{\text{model}}, v)$ \\
  & MLP probe                      & $d_{\text{model}}\!\to\!2 d_{\text{model}}\!\to\!v$, GELU, dropout $0.1$ \\
  & probe optimizer                & Adam, lr $10^{-3}$ \\
  & probe steps per eval           & $2000$ (each: linear, MLP) \\
\bottomrule
\end{tabular}
\end{table}

\section{Compute budget \label{app:sec:compute}}
SLC experiments were conducted on individual H100 nodes. The SLC experiments in \Cref{fig:kmeans_and_slc} took approximately 10 H100 hours. The $L$ scaling experiment reported in \Cref{fig:slc_L_scaling} cost approximately 100 H100 hours, as the amount of computations scales linearly with input data size which consists of $ s^L$ tokens (and therefore exponentially in depth). \Cref{fig:slc_stop_grads} cost approximately 3 H100 hours. We do not have an estimate for the Optuna hyperparameter sweeps or preliminary experiments. 

The data2vec experiments cost approximately 1,000 H100 hours.

\newpage %

\end{document}